\newtheorem{theorem}{Theorem}
\newtheorem{definition}{Definition}
\providecommand{\tabularnewline}{\\}
\providecommand{\algorithmname}{Algorithm}
\begin{document}

\title{Labeled Directed Acyclic Graphs: a generalization of context-specific independence in directed graphical models}

\author{Johan Pensar$^{\ast,1}$, Henrik Nyman$^{1}$, Timo Koski$^{3}$, Jukka Corander$^{1,2}$ \\
$^{1}$Department of Mathematics, \AA bo Akademi University, Finland \\
$^{2}$Department of Mathematics and statistics, University of Helsinki, Finland \\
$^{3}$Department of Mathematics \\ KTH Royal Institute of Technology, Stockholm, Sweden \\
$^{\ast}$Corresponding author, Email: jopensar@abo.fi}

\date{}
\maketitle
\begin{abstract}
We introduce a novel class of labeled directed acyclic graph (LDAG)
models for finite sets of discrete variables. LDAGs generalize earlier
proposals for allowing local structures in the conditional probability
distribution of a node, such that unrestricted label sets determine
which edges can be deleted from the underlying directed acyclic graph (DAG) for a given context.
Several properties of these models are derived, including a generalization
of the concept of Markov equivalence classes. Efficient
Bayesian learning of LDAGs is enabled by introducing an LDAG-based
factorization of the Dirichlet prior for the model parameters, such
that the marginal likelihood can be calculated analytically. In addition,
we develop a novel prior distribution for the model structures that
can appropriately penalize a model for its labeling complexity.
A non-reversible Markov chain Monte Carlo algorithm combined with
a greedy hill climbing approach is used for illustrating the useful
properties of LDAG models for both real and synthetic data sets.\\

\noindent \textbf{Keywords:} Directed acyclic graph; Graphical model; Context-specific independence; Bayesian model learning; Markov chain Monte Carlo

\end{abstract}
\parskip 0pt
\section{Introduction\label{sec:Introduction}}
Directed acyclic graphs have gained widespread popularity as representations
of complex multivariate systems (\citet{key-1,Koller+Friedman:09}).
Despite their advantageous properties for representing dependencies
among variables in a modular fashion, several proposals for making
them more flexible and parsimonious have been presented (\citet{Boutilier96CSIinBN,Friedman96learnBNlocstruct,Chickering97abayesian,Eriksen99CSI,Poole03exploitingcontextual,Koller+Friedman:09}).
In particular, an important notion is to allow the dependencies to have
local structures, such that a node need not explicitly depend on all
the combinations of values of its parents. This leads to context-specific
independence which can substantially reduce the parametric dimensionality
of a network model and lead to a more expressive interpretation of
the dependence structure (\citet{Boutilier96CSIinBN,Friedman96learnBNlocstruct,Poole03exploitingcontextual,Koller+Friedman:09}).
Context-specific independencies have also been seemingly separately
considered for undirected graphical models by multiple authors (\citet{Corander03LGM,Hojsgaard03,Hojsgaard04}).

Here we generalize the context-specific independence models proposed
in \citet{Boutilier96CSIinBN} by allowing the independencies to be represented
in terms of labels for the parental configurations of a node in an unrestricted
manner. This approach thus goes beyond the trees of conditional probability
tables considered in \citet{Boutilier96CSIinBN} as it instead introduces a partition
of the parental configurations into classes with invariant conditional
probability distributions for the outcomes that are assigned to the
same class. It is shown that such a definition leads to a model class
with a number of desirable properties, and we derive several properties
of the models, including their identifiability and an LDAG version
of the concept of a Markov equivalence class. 

We develop an efficient method for Bayesian learning of LDAG models
from a set of training data by introducing a prior distribution for
the model parameters that factorizes in a comparable manner as the
standard Dirichlet distribution used for learning DAG models. Since
the prior enables an analytical evaluation of the marginal likelihood
of an LDAG, the model space can be searched relatively fast for structures
that are associated with high posterior probabilities. To do this
in practice, we combine a non-reversible Markov chain Monte Carlo
algorithm with a greedy hill climbing approach to obtain a method
that is not computationally too expensive. 

The structure of this article is as follows. In Section 2 we introduce
the LDAG models (\ref{subsec:Preliminaries}) and investigate their properties (\ref{sec:propLDAGs}). In Section 3 we
develop the Bayesian learning method and apply it to both real and
synthetic data sets in Section 4 to illustrate the favorable properties
of our approach. Some concluding remarks are provided in the final
section.

\section{DAG- and LDAG-based graphical models\label{sec:DAGs and LDAGs}}
\subsection{Preliminaries and introduction of LDAGs\label{subsec:Preliminaries}}
A DAG is a directed graph that is built up by nodes and directed edges. The acyclic property ensures that no directed path starting from a node leads back to that particular node. The concept of DAG-based graphical models, or Bayesian networks, was formalized by \citet{Pearl88ProbReason}. In a Bayesian network, the nodes represent variables and the directed edges represent direct dependencies among the variables. Correspondingly, absence of edges represents statements of conditional independence. The constraints imposed by the structure of a DAG alone have been recognized to be unnecessarily stringent under certain circumstances where context-specific or asymmetric independence can play a natural role in the models. In general, two approaches have been considered for this problem. The most common approach is based on different representations of conditional probability distributions (CPD) that are hidden behind the graph topology (\citet{Boutilier96CSIinBN,Chickering97abayesian,Poole03exploitingcontextual}). The other approach has focused on the topology of the graph structure itself (\citet{Heckerman91ProbSimNet,Geiger96multinets}). \citet{Heckerman91ProbSimNet} introduced similarity networks which are made up of multiple networks. This representation and the related Bayesian multinets are further examined in \citet{Geiger96multinets}. They show how asymmetric independencies can be represented by multiple Bayesian networks and how these independence assertions can speed up inference. 

In this paper we will bring the CPD- and graph-based approaches together by introducing a graphical representation scheme in the form of labeled DAGs whose associated CPDs can be stored in compact tables based on rules. To illustrate the concept of LDAGs, we consider the following example from \citet{Geiger96multinets}, p. 52:\\
\begin{adjustwidth}{0.75cm}{0cm}
A guard of a secured building expects three types of persons ($h$) to approach the building's entrance: workers in the buildings, approved visitors, and spies. As a person approaches the building, the guard can note its gender ($g$) and whether or not the person wears a badge ($b$). Spies are mostly men. Spies always wear badges in an attempt to fool the guard. Visitors don't wear badges because they don't have one. Female workers tend to wear badges more often than do male workers. The task of the guard is to identify the type of person approaching the building.\\
\end{adjustwidth}
This scenario can be represented by the DAG on top in Figure \ref{fig:spyEx}. The topology of this graph, however, hides the fact that gender and badge wearing are conditionally independent, given that the person is a spy or visitor. The corresponding joint probability distribution is, as a result of this, overparameterized in the sense that it requires a total of 11 free parameters although some of these are identical. \citet{Geiger96multinets} noticed that this scenario is better represented by the multiple graphs reproduced here in the middle of Figure \ref{fig:spyEx}. This representation is made up of two context-specific graphs that together show that the dependence between gender and badge wearing only holds in the context of the person being a worker. The corresponding joint probability distribution now only requires 9 free parameters. Now consider the labeled DAG on the bottom in Figure \ref{fig:spyEx}. We have added the label $\{spy,visitor\}$ to the edge $(g,b)$. This label implies that gender and badge are independent given that the person approaching the building is a spy or a visitor. Although an LDAG is global in its representation, it can still represent independencies that only hold in certain contexts. This allows it to represent the same dependence structure that requires multiple graphs using the multinet approach. As for the multinet approach, this type of representation requires 9 free parameters.
\begin{figure}
\begin{centering}
\includegraphics[width=\textwidth-5cm]{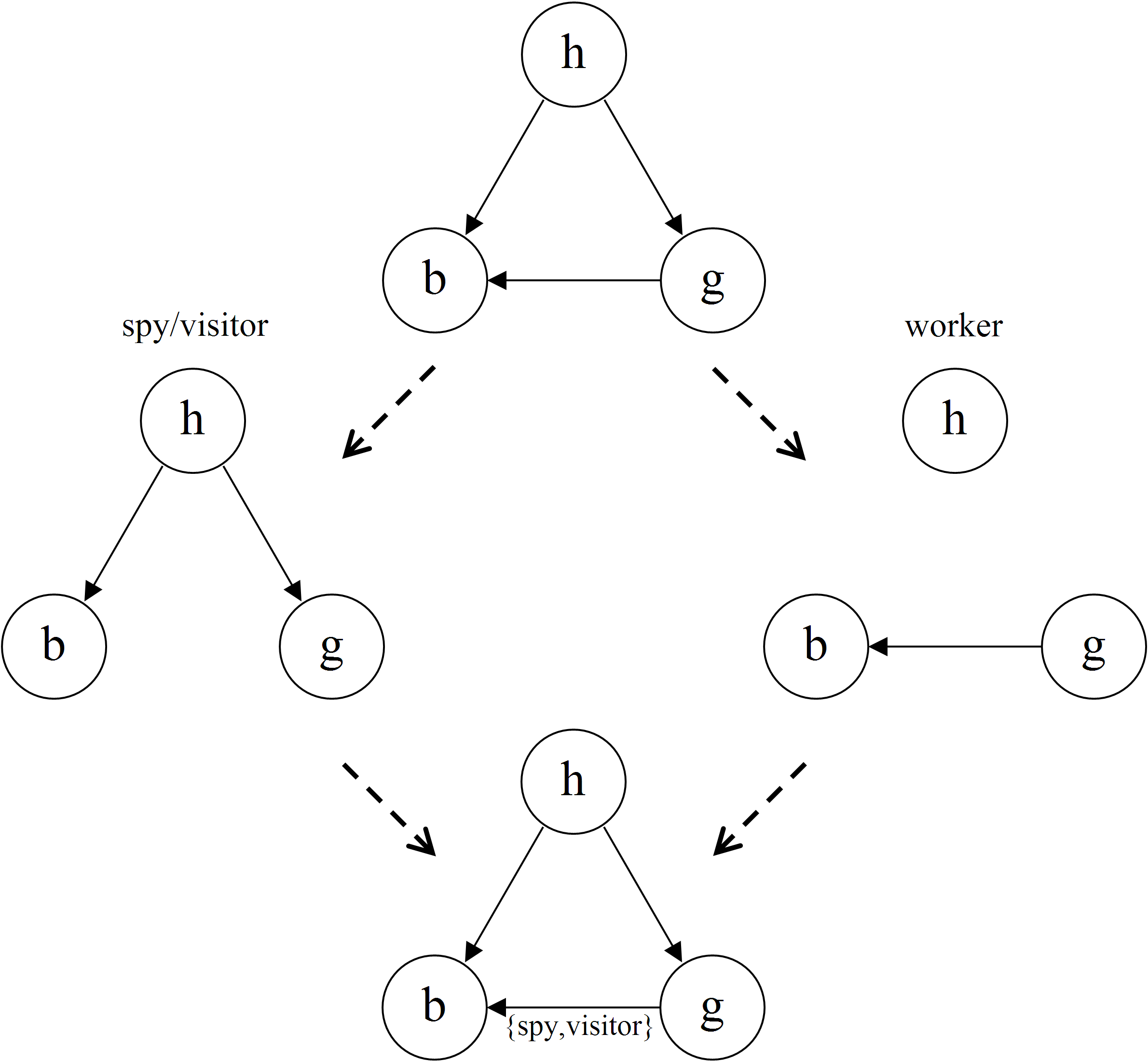}
\caption{Graph structures describing the spy/visitor/worker-scenario.\label{fig:spyEx}}
\end{centering}
\end{figure} 

Before stating any formal definitions, we provide some notations. A DAG will be denoted by $G=(V,E)$ where $V=\{1,\ldots,d\}$ is the set of nodes and $E\subset V\times V$ is the set of edges such that if $(i,j)\in E$ then the graph contains a directed edge from node $i$ to $j$. Nodes, from which there is a directed edge to node $j$, are called parents of $j$ and the set of all such nodes is denoted by $\Pi_{j}=\{ i\in V:(i,j)\in E\}$. The nodes $V$ give the indices of a set of stochastic variables $X=\{X_{1},\ldots,X_{d}\}$. Due to the close relationship between a node and its corresponding variable, the terms node and variable are used interchangeably. We use small letters $x_{j}$ to denote a value taken by the corresponding variable. If $S\subseteq V$, then $X_{S}$ denotes the corresponding set of variables. The outcome space of a variable $X_{j}$ is denoted by $\mathcal{X}_{j}$ and the joint outcome space of a set of variables by the Cartesian product $\mathcal{X}_{S}=\times_{j\in S}\mathcal{X}_{j}$. The cardinality of the outcome space of $X_{S}$ is denoted by $|\mathcal{X}_{S}|$. 

An ordinary DAG encodes independence statements in the form of conditional independencies. 
\begin{definition}\textit{Conditional Independence (CI)\label{Conditional-Independence}}\\
Let {$X=\{X_{1},\ldots,X_{d}\}$}
be a set of stochastic variables where $V=\{1,\ldots d\}$ and let $A$, $B$, $S$ be three disjoint subsets of $V$. $X_{A}$ is conditionally independent of $X_{B}$ given $X_{S}$ if  
\[
p(x_{A}\mid x_{B},x_{S})=p(x_{A}\mid x_{S})
\]
holds for all $(x_{A},x_{B},x_{S})\in\mathcal{X}_{A}\times\mathcal{X}_{B}\times\mathcal{X}_{S}$ whenever $p(x_{B},x_{S})>0$. This will be denoted by 
\[
X_{A}\perp X_{B}\mid X_{S}.
\]
\end{definition}
\noindent If we let $X_{S}=\varnothing$, then $X_{A}\perp X_{B}$
simply denotes marginal independence between the two sets of variables. The most basic statements of conditional independence, reflected by a DAG, follow the directed local Markov property. It implies that each variable $X_{j}$ is conditionally independent of its non-descendants given its parental variables $X_{\Pi_{j}}$. This leads to a unique explicit factorization of the joint distribution into lower order distributions,
\begin{equation}
p(X_{1},\ldots X_{d})=\overset{{\scriptstyle d}}{\underset{{\scriptstyle j=1}}{\prod}}p(X_{j}\mid X_{\Pi_{j}}),\label{eq:facDAG}
\end{equation}
where the factors are CPDs that correspond to local structures. By local structure, we refer to the node itself, its parents and the edges from the parents to the node.  The topology of an ordinary DAG restricts it to only encoding for independence relations that hold globally. However, as shown in the example above, it is natural to consider independence relations that only hold in certain contexts. The following notion of context-specific independence was formalized by \citet{Boutilier96CSIinBN}.   
\begin{definition}\textit{Context-Specific Independence (CSI)}\\ 
Let {$X=\{X_{1},\ldots,X_{d}\}$}
be a set of stochastic variables where $V=\{1,\ldots d\}$ and let $A$, $B$, $C$, $S$ be four disjoint subsets of $V$. $X_{A}$ is contextually independent of $X_{B}$ given $X_{S}$ and the context $X_{C}=x_{C}$ if  
\[
p(x_{A}\mid x_{B},x_{C},x_{S})=p(x_{A}\mid x_{C},x_{S})\,,
\]
holds for all $(x_{A},x_{B},x_{S})\in\mathcal{X}_{A}\times\mathcal{X}_{B}\times\mathcal{X}_{S}$ whenever $p(x_{B},x_{C},x_{S})>0$. This will be denoted by 
\[
X_{A}\perp X_{B}\mid x_{C},X_{S}.
\]
\end{definition}
It has been discovered by numerous authors that certain CSIs can naturally be captured simply by further refining (\ref{eq:facDAG}). We will refer to these statements as local CSIs as they are confined to the local structures.
\begin{definition}\textit{Local CSI in a DAG\label{Local CSI}}\\
A CSI in a DAG is local if it is of the form $X_{j}\perp X_{B}\mid x_{C}$, where $B$ and $C$ form a partition of the parents of node $j$. 
\end{definition}

In the CPD-based approaches to including CSI in Bayesian networks, the context-specific local structures cannot be read directly off the graph structure. This is the key to the usefulness of multinets. A multinet offers a natural representation of the dependence structure by explicitly showing the independencies in a graphical form. In an attempt to further pursue this idea, we introduce a graph topology that is able to visualize the local CSIs directly as a part of a single graph structure as done in Figure \ref{fig:spyEx}. To achieve this we add labels to the edges in a similar way as \citet{Corander03LGM}. This enables incorporation of local CSIs in a single graph as opposed to multiple networks -approaches where one might need one graph for each distinct context. An LDAG is now formally defined as a DAG with labels representing local CSIs. 
\begin{definition}\textit{Labeled Directed Acyclic Graph (LDAG)\label{Labelled-Directed-Acyclic}}\\
 Let $G=(V,E)$ be a DAG over the stochastic variables $\{X_{1},\ldots,X_{d}\}$. For all $(i,j)\in E$, let $L_{(i,j)}=\Pi_{j}\setminus\{i\}$. A label on an edge $(i,j)\in E$ is defined as the set 
\[
\mathcal{L}_{(i,j)}=\left\{ x_{L_{(i,j)}}\in\mathcal{X}_{L_{(i,j)}}:X_{j}\perp X_{i}\mid x_{L_{(i,j)}}\right\} .
\]
An LDAG is a DAG to which the label set $\mathcal{L}_{E}=\{\mathcal{L}_{(i,j)}:\mathcal{L}_{(i,j)}\neq\varnothing\}_{(i,j)\in E}$ has been added, it is denoted by $G_{L}=(V,E,\mathcal{L}_{E})$ 
\end{definition}
\begin{figure}
\begin{centering}
\includegraphics[width=5cm]{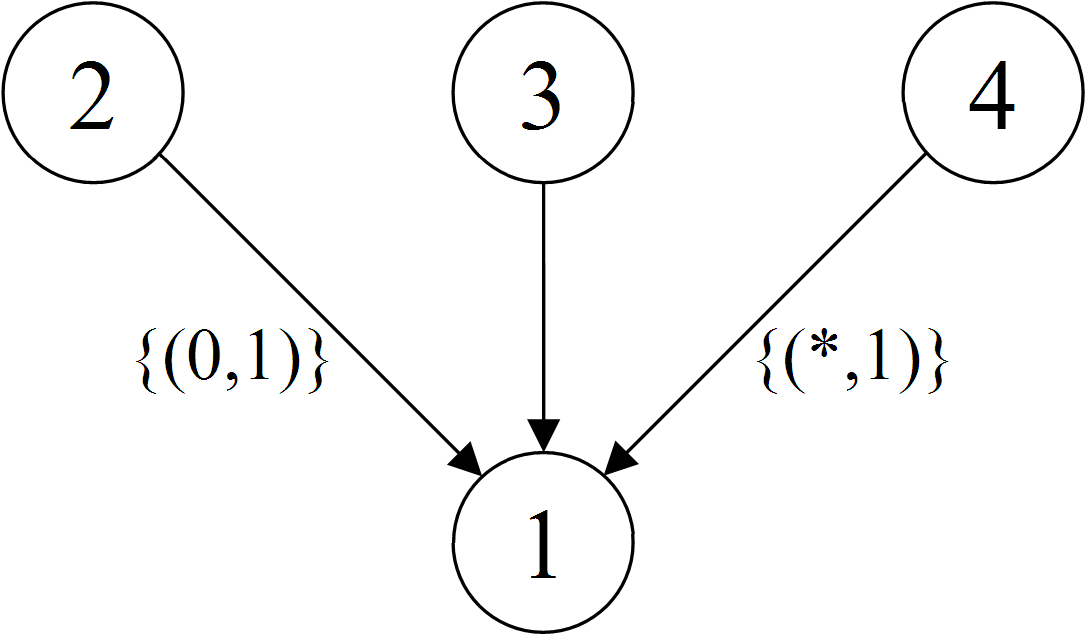}{\small 
\begin{eqnarray*}
\mathcal{L}_{(2,1)}=(0,1) &  & \Rightarrow X_{1}\perp X_{2}\mid(X_{3},X_{4})=(0,1)\\
\\
\mathcal{L}_{(4,1)}=\mathcal{X}_{2}\times \{1\} &  & \Rightarrow X_{1}\perp X_{4}\mid X_{2}\in \mathcal{X}_{2},X_{3}=1\\
 &  & \Leftrightarrow X_{1}\perp X_{4}\mid X_{2},X_{3}=1
\end{eqnarray*}}
\vspace{-0.5cm}
\caption{Local CSI-structure and the corresponding local CSIs.\label{fig:locCSIstructWstat}}
\par\end{centering}{\small \par}
\end{figure} 
With respect to a fixed ordering of the variables, the labels do not have to contain any variable indices as $X_{L_{(i,j)}}$ contains all the parental variables in the node's local structure except the one that is part of the edge. A node must naturally have at least two parents for it to be possible for an incoming edge to contain a label. In subsequent examples, we assume that the variables are binary with $\mathcal{X}_{j}=\{0,1\}$. However, the derived theory applies to non-binary variables as well. For $L_{(i,j)}=\{k,l\}$, we will use $(*,x_{l})$ to denote a label set of the form $\mathcal{X}_k \times \{x_{l}\}$. Figure \ref{fig:locCSIstructWstat} now illustrates how labels may be added to the edges of a local structure and how they should be interpreted. The number of configurations relative to the number of possible configurations in a label can be considered an indication of the strength of the dependence conveyed by the corresponding edge in that particular local structure. 

The CPD-based approach for generalizing Bayesian networks utilizes the fact that local CSIs correspond to certain regularities among the CPDs arising in the factorization (\ref{eq:facDAG}). The focus has therefore been to find different ways of representing the CPDs. \citet{Boutilier96CSIinBN} use decision trees in which certain local CSI-structures can be captured in a very natural way. However, due to the replication problem (\citet{Pagallo90RepProb}), trees are somewhat limited in their expression power when it comes to certain types of CSI-structure. \citet{Chickering97abayesian} overcome this shortcoming by using a more general type of representation in the form of decision graphs. Unfortunately, decision graphs usually leave the scope of CSI complicating the interpretability of the models from a (in)dependence perspective. In addition, such models lack the ability to exploit CSI in inference. Next we investigate how the tree- and decision graph-based approaches are connected to the LDAG representation which in fact can be considered a compromise between the two representations. LDAGs allow for more expressive models than CPT-trees without leaving the scope of CSI which provides a natural interpretation and proven computational advantages when performing inference.

The textbook way of representing the CPDs is in the form of full conditional probability tables (CPT) of sizes $(|\mathcal{X}_{j}|-1)\cdot|\mathcal{X}_{{\Pi}_{j}}|$. This form of representation requires tables that grow exponentially with the numbers of parents as it fails to capture regularities present in the CPDs. Including local CSIs in the model, however, directly implies that certain CPDs must be similar and need only be defined once. 
\begin{figure}
\begin{minipage}[t]{5cm}%
\begin{center}
\includegraphics[width=5cm]{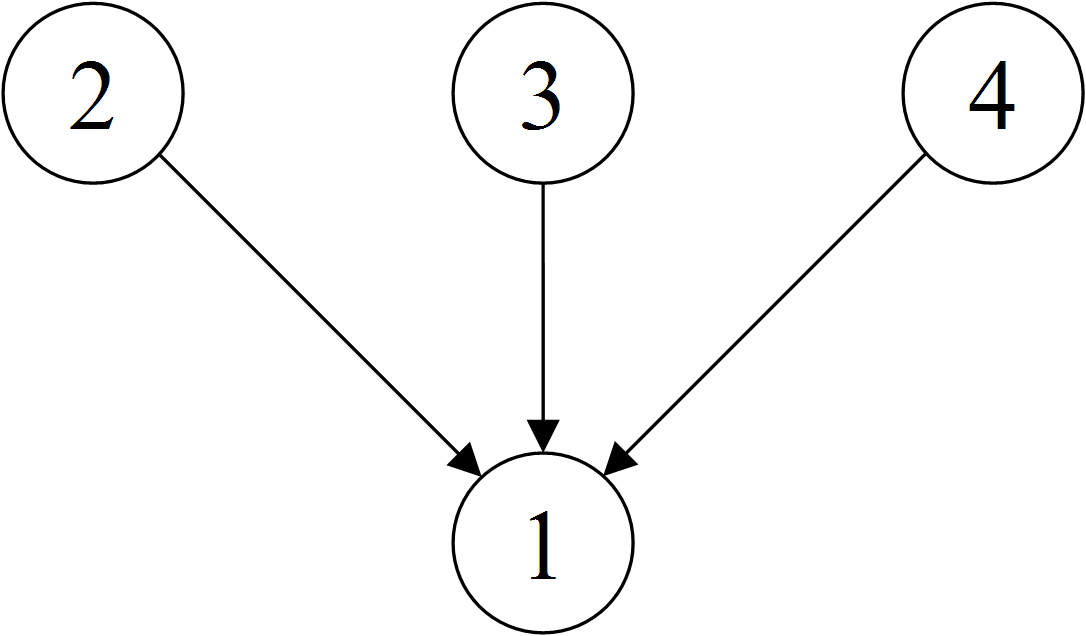}
\par\end{center}%
\end{minipage}%
\hspace{0.1cm}
\begin{minipage}[t]{\columnwidth-5cm}%
\begin{center}
\vspace{-3cm}
\begin{tabular}{|c|c|}
\hline\
{\small $X_{\Pi_{1}}$} & {\small $p(X_{1}\mid X_{\Pi_{1}})$}\tabularnewline
\hline 
{\scriptsize $X_{2}=0\wedge X_{3}=0\wedge X_{4}=0$} & {\scriptsize $p_{1}$}\tabularnewline
{\scriptsize $X_{2}=0\wedge X_{3}=0\wedge X_{4}=1$} & {\scriptsize $p_{3}$}\tabularnewline
{\scriptsize $X_{2}=0\wedge X_{3}=1\wedge X_{4}=0$} & {\scriptsize $p_{4}$}\tabularnewline
{\scriptsize $X_{2}=0\wedge X_{3}=1\wedge X_{4}=1$} & {\scriptsize $p_{4}$}\tabularnewline
{\scriptsize $X_{2}=1\wedge X_{3}=0\wedge X_{4}=0$} & {\scriptsize $p_{2}$}\tabularnewline
{\scriptsize $X_{2}=1\wedge X_{3}=0\wedge X_{4}=1$} & {\scriptsize $p_{3}$}\tabularnewline
{\scriptsize $X_{2}=1\wedge X_{3}=1\wedge X_{4}=0$} & {\scriptsize $p_{5}$}\tabularnewline
{\scriptsize $X_{2}=1\wedge X_{3}=1\wedge X_{4}=1$} & {\scriptsize $p_{5}$}\tabularnewline
\hline 
\end{tabular}
\par\end{center}%
\end{minipage}
\par
\caption{Local structure and the associated CPT.\label{fig:locStructWcpt}}
\end{figure}
Consider the local structure and associated CPT in Figure \ref{fig:locStructWcpt}. We use complete AND-rules to represent the distinct parent configurations. A rule is complete if all parental variables are part of it. By investigating the right column of the CPT, we see that there are only five distinct CPDs. Still, the naive approach requires us to define $p(X_{1}\mid x_{\Pi_{1}})$ for each distinct parent configuration $x_{\Pi_{1}}\in \mathcal{X}_{\Pi_{1}}$. It is therefore easy to realize that this representation is far from minimal.
\begin{figure}
\begin{minipage}[t]{5cm}%
\begin{center}
\includegraphics[width=5cm]{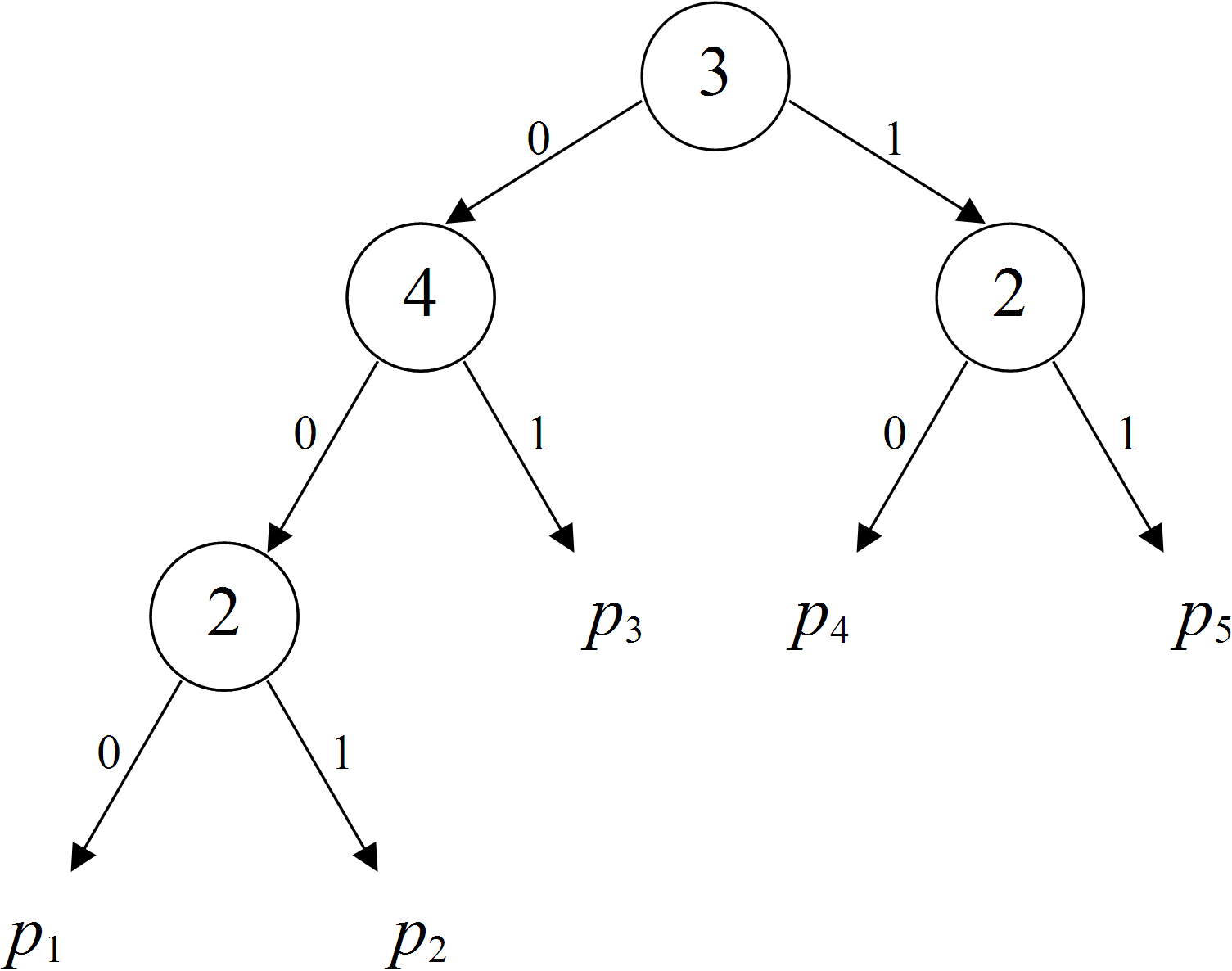}
\par\end{center}%
\end{minipage}%
\hspace{0.1cm}
\begin{minipage}[t]{\columnwidth-5cm}%
\begin{center}
\vspace{-3cm}
\begin{tabular}{|c|c|}
\hline 
{\small $X_{\Pi_{j}}$} & {\small $p(X_{1}\mid X_{\Pi_{j}})$}\tabularnewline
\hline
{\scriptsize $X_{3}=0\wedge X_{4}=0\wedge X_{2}=0$} & {\scriptsize $p_{1}$}\tabularnewline
{\scriptsize $X_{3}=0\wedge X_{4}=0\wedge X_{2}=1$} & {\scriptsize $p_{2}$}\tabularnewline
{\scriptsize $X_{3}=0\wedge X_{4}=1$} & {\scriptsize $p_{3}$}\tabularnewline 
{\scriptsize $X_{3}=1\wedge X_{2}=0$} & {\scriptsize $p_{4}$}\tabularnewline
{\scriptsize $X_{3}=1\wedge X_{2}=1$} & {\scriptsize $p_{5}$}\tabularnewline
\hline 
\end{tabular}
\par\end{center}%
\end{minipage}
\par
\caption{CPT-tree and the corresponding rule-based reduced CPT.\label{fig:CPTtreeWredCPT}}
\end{figure}
Using the approach of \citet{Boutilier96CSIinBN}, the regularities in the CPT in Figure \ref{fig:locStructWcpt} can be captured by the CPT-tree in Figure \ref{fig:CPTtreeWredCPT}. Each path in the tree corresponds to a rule that can be described by the AND-operator. By simply traversing down each distinct path until we reach a terminal node or leaf, we can transform the CPT in Figure \ref{fig:locStructWcpt} into its reduced counterpart on the right in Figure \ref{fig:CPTtreeWredCPT}. All parent configurations satisfying a certain rule give rise to the same CPD. This implies that the rules in a reduced CPT must be mutually exclusive for the representation to be minimal. The rules corresponding to a tree are mutually exclusive as two distinct paths cannot lead to the same leaf. If a variable is not part of a path (or the corresponding AND-rule), it implies that the particular variable is contextually independent of the variable associated with the CPT given the context encoded by the path (or rule). Following this method we can read off the following local CSIs: 
\[
\begin{array}{c}
\begin{array}{ccc}
X_{3}=0\wedge X_{4}=1 & \Rightarrow & X_{1}\perp X_{2}\mid (X_{3},X_{4})=(0,1)\end{array}\hspace*{4.125cm}\\
\left.\begin{array}{ccc}
X_{3}=1\wedge X_{2}=0 & \Rightarrow & X_{1}\perp X_{4}\mid (X_{2},X_{3})=(0,1)\\
X_{3}=1\wedge X_{2}=1 & \Rightarrow & X_{1}\perp X_{4}\mid (X_{2},X_{3})=(1,1)
\end{array}\right\} \Leftrightarrow X_{1}\perp X_{4}\mid X_{2},X_{3}=1
\end{array}
\]
If we once more consider Figure \ref{fig:locCSIstructWstat}, we see that the CSIs above coincide with the labels of this specific LDAG. More generally, any CPT-tree can be transformed into a reduced CPT by mutually exclusive AND-rules. Subsequently, incomplete rules can be turned into labels as illustrated in the above example.

Now consider the LDAG on the top in Figure \ref{fig:locCSIstructWtreeWdc} and its associated minimal reduced CPT on the bottom in Figure \ref{fig:CPTtoREDCPT}. This CSI-structure cannot be compactly represented by the structure of a CPT-tree.
\begin{figure}
\begin{center}
\includegraphics[width=\columnwidth-2cm]{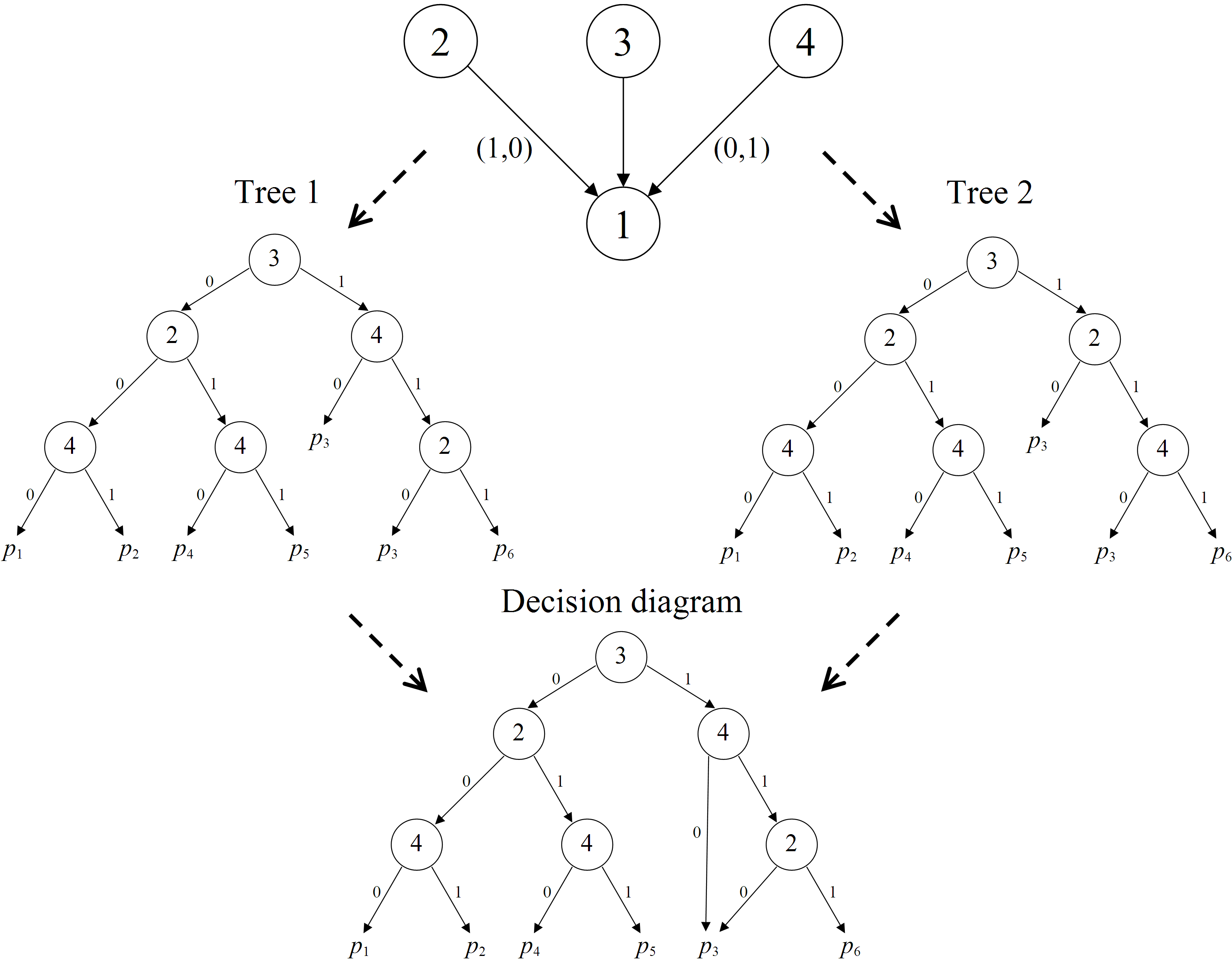}
\caption{Tree- and graph-based representations of a local CSI-structure.\label{fig:locCSIstructWtreeWdc}}
\end{center}
\end{figure} 
Depending on the order of the variables, we can at best represent either of the CSIs (tree 1 or tree 2 in Figure \ref{fig:locCSIstructWtreeWdc}). As a result, duplicate subtrees can be found in both trees. This illustrates the weakness of tree structures. Once we split on a variable, it is rendered essential for the particular context even if it in the end does not affect the CPD. To represent this type of CSI-structures through a graphical representation, we need the more general decision graph used by \citet{Chickering97abayesian}. As a node in a decision graph may have multiple parents, we can merge leaves with similar CPDs in a tree. Merging such leaves in tree 1 in Figure \ref{fig:locCSIstructWtreeWdc} results in the decision graph at the bottom in which all CSIs are represented. When merging leaves in a CPT-tree, situations may arise where some of the corresponding AND-rules are not mutually exclusive anymore. In order to achieve a minimal representation of the reduced CPT, any mutually non-exclusive AND-rules must be combined with the OR-operator.

To show how the minimal reduced CPT is recovered from the labels we proceed stepwise as shown in Figure \ref{fig:CPTtoREDCPT}. Each of the labels corresponds to a single reduced AND-rule resulting in the upper table. The rules on row 3 and 6 are not mutually exclusive at this point as they both are satisfied by the common parent configuration $(X_2,X_3,X_4)=(0,1,0)$. This implies that any parent configuration satisfying any of these rules must give rise to the same CPD. The AND-rules are therefore combined with the OR-operator resulting in the minimal reduced CPT on the bottom of the figure. More generally, each configuration in the labels of a local structure corresponds to an AND-rule. If any two rules overlap, they are combined with the OR-operator. The rules of a minimal reduced CPT created by this method will thus be mutually exclusive and exhaustive with respect to the outcome space of the parental variables. 
\begin{figure}
\begin{centering}
\begin{tabular}{|c|c|}
\hline 
{\small $X_{\Pi_{1}}$} & {\small $p(X_{1}\mid X_{\Pi_{1}})$}\tabularnewline
\hline 
{\scriptsize $X_{2}=0\wedge X_{3}=0\wedge X_{4}=0$} & {\scriptsize $p_{1}$}\tabularnewline
{\scriptsize $X_{2}=0\wedge X_{3}=0\wedge X_{4}=1$} & {\scriptsize $p_{2}$}\tabularnewline
{\scriptsize $X_{2}=0\wedge X_{3}=1$} & {\scriptsize $p_{3}$}\tabularnewline
{\scriptsize $X_{2}=1\wedge X_{3}=0\wedge X_{4}=0$} & {\scriptsize $p_{4}$}\tabularnewline
{\scriptsize $X_{2}=1\wedge X_{3}=0\wedge X_{4}=1$} & {\scriptsize $p_{5}$}\tabularnewline
{\scriptsize $X_{3}=1\wedge X_{4}=0$} & {\scriptsize $p_{3}$}\tabularnewline
{\scriptsize $X_{2}=1\wedge X_{3}=1\wedge X_{4}=1$} & {\scriptsize $p_{6}$}\tabularnewline
\hline 
\end{tabular}

\vspace{0.25cm}
\huge $\Downarrow \ \ \Downarrow \ \ \Downarrow$ \small
\vspace{0.25cm}

\begin{tabular}{|c|c|}
\hline 
{\small $X_{\Pi_{1}}$} & {\small $p(X_{1}\mid X_{\Pi_{1}})$}\tabularnewline
\hline 
{\scriptsize $X_{2}=0\wedge X_{3}=0\wedge X_{4}=0$} & {\scriptsize $p_{1}$}\tabularnewline
{\scriptsize $X_{2}=0\wedge X_{3}=0\wedge X_{4}=1$} & {\scriptsize $p_{2}$}\tabularnewline
{\scriptsize $(X_{2}=0\wedge X_{3}=1) \vee (X_{3}=1\wedge X_{4}=0)$} & {\scriptsize $p_{3}$}\tabularnewline
{\scriptsize $X_{2}=1\wedge X_{3}=0\wedge X_{4}=0$} & {\scriptsize $p_{4}$}\tabularnewline
{\scriptsize $X_{2}=1\wedge X_{3}=0\wedge X_{4}=1$} & {\scriptsize $p_{5}$}\tabularnewline
{\scriptsize $X_{2}=1\wedge X_{3}=1\wedge X_{4}=1$} & {\scriptsize $p_{6}$}\tabularnewline
\hline 
\end{tabular}
\caption{Constructing a minimal reduced CPT through a multi-step process.\label{fig:CPTtoREDCPT}}
\end{centering}
\end{figure} 

A CPT-representation may in fact be viewed as a function that given a parent configuration returns a CPD. The common factor among the different CPD-based representations is that they all induce partitions of the outcome space of the parental variables. If the representation is based on the notion of CSI, the corresponding partition will be referred to as CSI-consistent. Decision graphs in general go beyond the scope of CSI as they are able to represent any arbitrary partition. Subsequently, a decision graph must fulfill certain criteria structure-wise for it to be consistent with CSI. Still, even if a decision graph indeed is consistent with respect to CSI, the interpretation of the local CSIs is not trivial. In an LDAG, however, all local CSIs can readily be recovered from the labels. By introducing the class of LDAGs we aim at balancing the expressive power of the models against their interpretability. CSI has a sound interpretation and arises naturally in various situations. From a computational perspective, CSI has also proven to be particularly useful in probabilistic inference.  

Considerable research efforts have been devoted to outlining how CSI can be exploited in probabilistic inference. Probabilistic inference refers to the process of computing the posterior probability of a list of query variables given some observed variables. The key to efficient inference lies in the concept of factorization of the joint distribution. Incorporating local CSIs into the models, allows a further decomposition of (\ref{eq:facDAG}) into a finer-grained factorization which in turn can speed up the inference. \citet{Boutilier96CSIinBN} investigate how CPT-trees can be used to speed up various inference algorithms. As a consequence of the replication problem, \citet{Poole97ruleInference} concludes that rule-based versions may be more efficient than tree-based. \citet{Zhang99roleCSIinference} give a more general analysis of the computational leverages that CSI has to offer without referring to any particular inference algorithms. \citet{Poole03exploitingcontextual} further improve the efficiency of the approach of \citet{Poole97ruleInference} by using the concept of confactors which is a combination of contexts and tables. They introduce contextual belief networks which are similar to traditional Bayesian networks except that the CPDs are associated with parent contexts rather than explicit parent configurations. The labels in an LDAG correspond directly to the parent contexts of a contextual belief network. However, in the process of making the contexts mutually exclusive they proceed in a different manner than in Figure \ref{fig:CPTtoREDCPT}. Their approach is more beneficial inference-wise but less compact, which interferes with the learning procedure we discuss in the next section. It is worth noting, however, that one may simply choose the approach more suitable for the problem at hand.    

Inference that exploits CSI has been quite thoroughly investigated by numerous authors. In this paper we thus focus on model identifiability and learning. From a practical point of view, one might argue that the sole existence of expressive and efficient models is not enough if these models cannot be accurately learned from a set of data. The more expressive the models, the harder the learning tends to get due to added flexibility. In section \ref{sec:Bayesian-learning-of-LDAGs} we present a Bayesian learning scheme for LDAGs but first we attend the aspect of model identifiability and interpretability. 

\subsection{Properties of LDAGs\label{sec:propLDAGs}}
To facilitate the interpretation of the CSI-structure of LDAGs, we utilize two conditions introduced for labeled undirected graphical models (LGMs) in \citet{Corander03LGM}; maximality and regularity.

Given a local structure, different  label combinations may induce the same local CSI-structure. This phenomenon may arise when the contexts of two or more labels overlap. If two distinct label combinations induce equivalent CSI-structures, it implies that the dissimilar label configurations can be added to each of the label combinations without inducing any new restrictions. To avoid this type of ambiguities we introduce the maximality condition for LDAGs.
\begin{definition}\textit{Maximal LDAG\label{MaximalDEF}}\\
An LDAG $G_{L}=(V,E,\mathcal{L}_{E})$ is called maximal if there exists no configuration $x_{L_{(i,j)}}$ that can be added to the label $\mathcal{L}_{(i,j)}$ without inducing an additional local CSI.
\end{definition}
If we add a configuration to a label in a maximal LDAG, it must result in an additional restriction in form of a local CSI. This will in turn result in a reduction of the associated minimal reduced CPT.
\begin{theorem}
Let $G_{L}=(V,E,\mathcal{L}_{E})$ and $G_{L}^{*}=(V,E,\mathcal{L}_{E}^{*})$
be two maximal LDAGs with the same underlying DAG $G=(V,E)$. Then $G_{L}$ and $G_{L}^{*}$ represent equivalent dependence structures if and only if $\mathcal{L}_{E}=\mathcal{L}_{E}^{*}$, i.e. $G_{L}=G_{L}^{*}$.  
\end{theorem}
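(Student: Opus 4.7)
The plan is to prove each direction separately, with the reverse direction being essentially trivial and the forward direction following by applying the maximality condition to an edge where the labels would differ.

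First I would dispose of the easy direction: if $\mathcal{L}_E = \mathcal{L}_E^*$, then $G_L$ and $G_L^*$ are literally the same labeled graph, so they induce identical constraints on the joint distribution and in particular equivalent dependence structures. Nothing more is needed here.

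For the substantive direction, I would argue by contrapositive. Assume $\mathcal{L}_E \neq \mathcal{L}_E^*$. Then there is at least one edge $(i,j) \in E$ at which $\mathcal{L}_{(i,j)} \neq \mathcal{L}_{(i,j)}^*$, and by symmetry I may pick a configuration $x_{L_{(i,j)}}$ that lies in $\mathcal{L}_{(i,j)}$ but not in $\mathcal{L}_{(i,j)}^*$. By the very definition of the label set (Definition \ref{Labelled-Directed-Acyclic}), membership of $x_{L_{(i,j)}}$ in $\mathcal{L}_{(i,j)}$ means that the local CSI
\[
X_j \perp X_i \mid x_{L_{(i,j)}}
\]
is part of the dependence structure encoded by $G_L$. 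If $G_L$ and $G_L^*$ represent equivalent dependence structures, this same CSI must also hold in every distribution admitted by $G_L^*$. Consequently, adding $x_{L_{(i,j)}}$ to $\mathcal{L}_{(i,j)}^*$ would not introduce any new constraint, i.e.\ would not induce any additional local CSI. By the maximality of $G_L^*$ (Definition \ref{MaximalDEF}), this forces $x_{L_{(i,j)}} \in \mathcal{L}_{(i,j)}^*$, contradicting our choice. Hence the dependence structures cannot be equivalent, which is the contrapositive of what we wanted.

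The conceptually delicate point, and what I expect to be the main obstacle, is pinning down precisely what ``equivalent dependence structures'' means in the LDAG setting and verifying that the implication ``label configuration $x_{L_{(i,j)}}$ is present in $\mathcal{L}_{(i,j)}$'' really translates into a CSI that is preserved under this equivalence. The argument above implicitly uses that the label is, by definition, the \emph{complete} collection of contexts $x_{L_{(i,j)}}$ for which $X_j \perp X_i \mid x_{L_{(i,j)}}$ holds in every distribution respecting the LDAG; so care must be taken that maximality is exactly the condition that makes the label coincide with this complete collection, rather than being a possibly strict subset of it. Once this alignment between the semantics of labels and the notion of maximality is articulated cleanly, the proof reduces to the short contrapositive argument sketched above.
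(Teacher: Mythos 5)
Your proposal is correct and follows essentially the same route as the paper's proof: locate a configuration present in one label set but not the other, observe that the corresponding local CSI must then hold (implicitly) in the other LDAG if the dependence structures are equivalent, and conclude that adding the configuration there would induce no additional local CSI, contradicting maximality. The paper phrases this as a two-case split (CSI implicitly represented versus not) applied to the graph lacking the configuration, but under the equivalence assumption only your single case can occur, so the arguments coincide; your closing remark about aligning the semantics of labels with maximality is exactly the informality the paper's own proof also leaves implicit.
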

\begin{proof}
Assume that $G_{L}=(V,E,\mathcal{L}_{E})$ and $G_{L}^{*}=(V,E,\mathcal{L}_{E}^{*})$ are two maximal LDAGs representing the same dependence structure and having the same underlying DAG $G=(V,E)$. Assume further that they have different labelings, i.e. $\mathcal{L}_{E}\not=\mathcal{L}_{E}^{*}$. There must thereby exist a configuration, say, $x_{L_{(i,j)}}^{*}\in \mathcal{L}_{(i,j)}^{*}$ such that $x_{L_{(i,j)}}^{*}\not\in \mathcal{L}_{(i,j)}$. This corresponds to the local CSI $X_{j} \perp X_{i} \mid x_{L_{(i,j)}}^{*}$ being explicitly represented in $G_{L}^{*}$ but not in $G_{L}$. For the LDAGs to represent the same dependence structure, the local CSI must, however, hold for $G_{L}$ as well. If the local CSI is implicitly represented by other labels in $G_{L}$, it implies that $x_{L_{(i,j)}}^{*}$ can be added to $\mathcal{L}_{(i,j)}$ with inducing an additional local CSI. This contradicts the maximality condition. If the CSI is not implicitly represented by other labels in $G_{L}$, it implies that adding $x_{L_{(i,j)}}^{*}$ to $\mathcal{L}_{E}$ induces an additional local CSI. This contradicts the assumption of the LDAGs representing the same dependence structure. This leads us to the conclusion that  $G_{L}$ and  $G_{L}^{*}$ represent the same dependence structures if and only if $\mathcal{L}_{E}=\mathcal{L}_{E}^{*}$. $\square$
\end{proof}
Each configuration in a label corresponds to a local CSI according to Definition \ref{Labelled-Directed-Acyclic}. If an LDAG is not maximal, some CSIs cannot be obtained directly from the graph following the definition. These CSIs are, however, implicitly reflected by the graph as they arise from a combination of other CSIs explicitly represented by labels in the graph. In maximal LDAGs, all local CSIs can be obtained directly from the graph. 
\begin{figure}
\hspace{0.3cm}%
\begin{minipage}[t]{0.30\columnwidth}%
\begin{center}
\includegraphics[width=5.2cm]{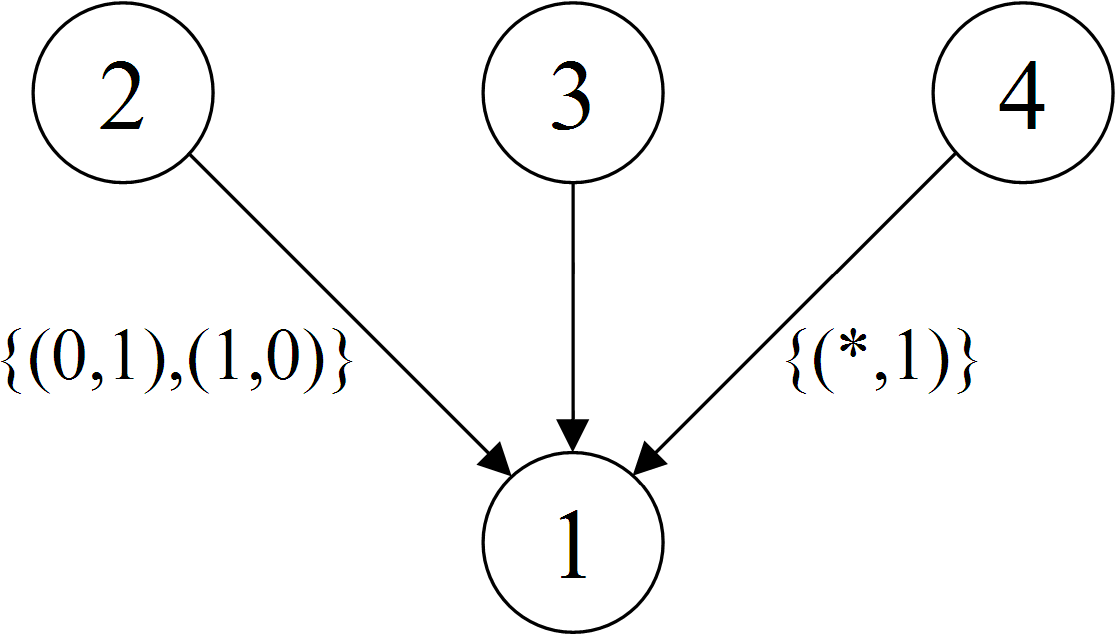}
\par\end{center}%
\end{minipage}\hspace{1.5cm}%
\begin{minipage}[t]{0.18\columnwidth}%
\begin{center}
\vspace{0.35cm}
\begin{tabular}{|c|c|}
\hline 
{\small $X_{\Pi_{1}}\in$} & {\small $p(X_{1}|X_{\Pi_{1}})$}\tabularnewline
\hline 
{\small $X_{2}=0\wedge X_{3}=0\wedge X_{4}=0$} & {\small $p_{1}$}\tabularnewline
{\small $X_{3}=0\wedge X_{4}=1$} & {\small $p_{3}$}\tabularnewline
{\small $ X_{3}=1 $} & {\small $p_{4}$}\tabularnewline
{\small $X_{2}=1\wedge X_{3}=0\wedge X_{4}=0$} & {\small $p_{2}$}\tabularnewline
\hline 
\end{tabular}
\par\end{center}%
\end{minipage}
\caption{Local CSI-structure and the associated minimal reduced CPT.\label{fig:LocalstructMaxEx}}
\end{figure}

To illustrate the maximality property, consider the local structure in Figure \ref{fig:LocalstructMaxEx}. The local structure is similar to the one in Figure \ref{fig:locCSIstructWstat} except that configuration $x_{L_{(2,1)}}=(1,0)$ has been added to its label. The local structure in Figure \ref{fig:LocalstructMaxEx} is now not maximal since $(1,1)$ can be added to $\mathcal{L}_{(2,1)}$ without resulting in an additional CSI. An intuitive way of reaching this conclusion is to consider the rules in the associated minimal reduced CPT next to the local structure. The rule on the third row arises from a combination of mutually non-exclusive rules:
\[
\left.\begin{array}{ccc}
x_{L_{(2,1)}}=(1,0) & \Rightarrow & X_{3}=1\wedge X_{4}=0\\
x_{L_{(4,1)}}=(0,1) & \Rightarrow & X_{2}=0\wedge X_{3}=1\\
x_{L_{(4,1)}}=(1,1) & \Rightarrow & X_{2}=1\wedge X_{3}=1
\end{array}\right\} \Rightarrow X_{3}=1
\]

\noindent As $(0,1,1)$ and $(1,1,1)$ satisfy this rule, no further merging of rules is done when $x_{L_{(2,1)}}=(1,1)$ is added to its label. This corresponds to the local CSI \[ X_{1}\perp X_{2}\mid X_{3}=1,X_{4}=1 \] implicitly being encoded by the other labels. This type of situation may arise when different label induced rules overlap and are combined with the OR-operator in order to achieve a minimal number of mutually exclusive rules.

The maximality condition is proven to be an essential condition for LDAGs. Without it we may fail in the interpretation of both local and, consequently, non-local CSIs which we consider later in this section. Failing in the interpretation of local CSIs hampers the efficiency of inference algorithms as useful CSIs may be neglected. A naive approach for testing whether an LDAG is maximal or not is simply to test each configuration that is not part of a label by adding it and checking if it results in a combining of rules or not. If there exists a configuration $x_{L_{(i,j)}}\not\in \mathcal{L}_{(i,j)}$ for which all parent contexts $\{x_{L_{(i,j)}}\}\times \mathcal{X}_{i}$ satisfy the same rule in the associated minimal reduced CPT, the LDAG is not maximal.

To ensure that the effect of an edge cannot completely vanish due to labels, we introduce the regularity condition for maximal LDAGs. 
\begin{definition}\textit{Regular maximal LDAG\label{RegularDEF}}\\
A maximal LDAG $G_{L}=(V,E,\mathcal{L}_{E})$ is regular
if $\mathcal{L}_{(i,j)}$ is a strict subset of $\mathcal{X}_{L_{(i,j)}}$
for every label in $G_{L}$. 
\end{definition}
\begin{figure}
\begin{centering}
\subfloat[Regular but not maximal.\label{fig:An-LDAG-satisfying}]{\includegraphics[height=2.88cm]{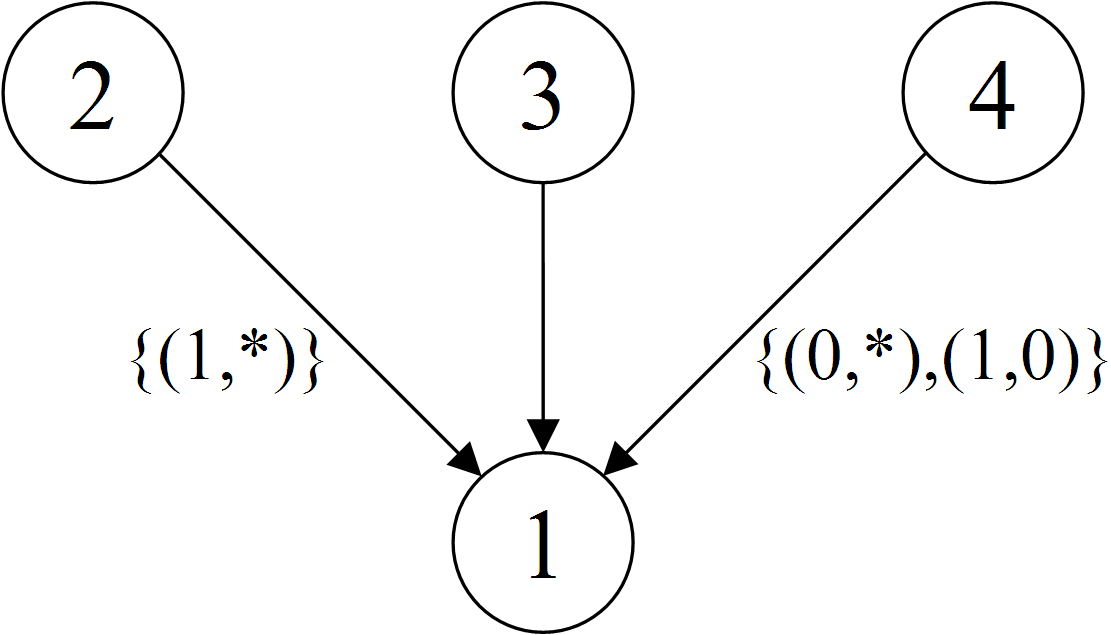}

}~~~~~\subfloat[Maximal but not regular.\label{fig:The-maximal-version} ]{\includegraphics[height=2.88cm]{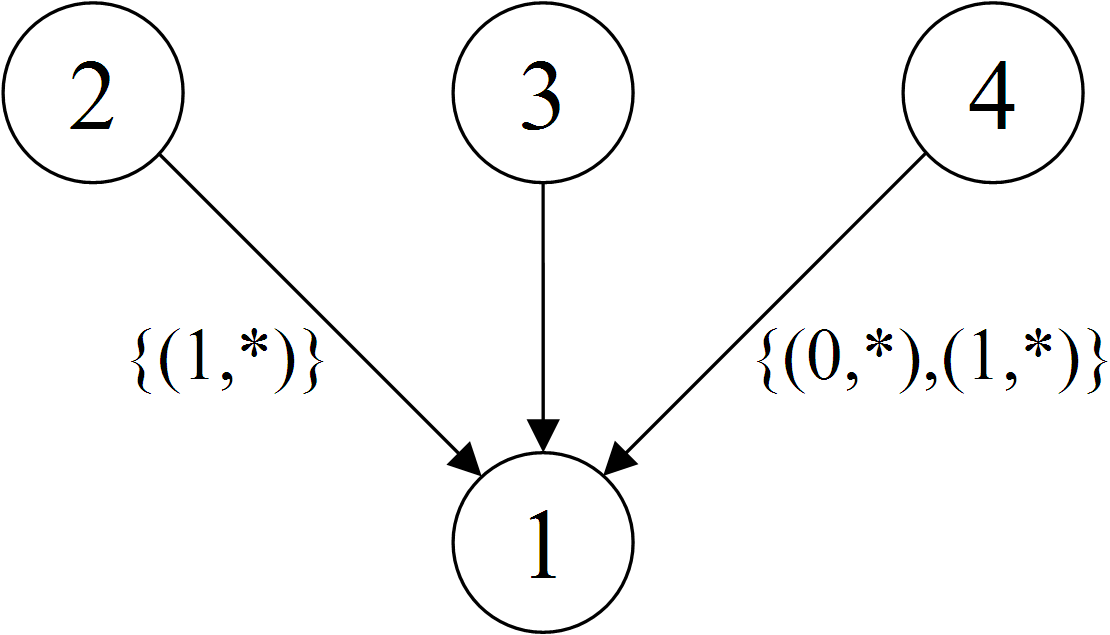}

}
\caption{Regularity condition for a maximal LDAG.\label{fig:Regularity-condition-for}}
\par\end{centering}
\end{figure}
The regularity condition is illustrated in Figure \ref{fig:Regularity-condition-for}. The LDAG in Figure \ref{fig:An-LDAG-satisfying} satisfies the regularity condition, however, this LDAG is not maximal. To make the LDAG maximal, the configuration $(1,1)$ must be added to $\mathcal{L}_{(4,1)}$ (Figure \ref{fig:The-maximal-version}) which now contains all possible configurations and thereby renders the maximal LDAG non-regular. 
\begin{theorem}\label{regmax edgerem}
In a regular maximal LDAG, a label $\mathcal{L}_{(i,j)}$ cannot induce an independence assertion of the form $X_{j} \perp X_{i} \mid x_{L_{(i,j)}}$ for all $x_{L_{(i,j)}} \in \mathcal{X}_{L_{(i,j)}}$, i.e. $X_{j} \perp X_{i} \mid X_{L_{(i,j)}}$.
\end{theorem}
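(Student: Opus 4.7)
The plan is to argue by contradiction, chaining the definitions of the label set, maximality, and regularity. Suppose the label $\mathcal{L}_{(i,j)}$ induces $X_j \perp X_i \mid X_{L_{(i,j)}}$, meaning that the local CSI $X_j \perp X_i \mid x_{L_{(i,j)}}$ holds for every $x_{L_{(i,j)}} \in \mathcal{X}_{L_{(i,j)}}$. The goal is to derive that $\mathcal{L}_{(i,j)}$ must then coincide with $\mathcal{X}_{L_{(i,j)}}$, contradicting Definition \ref{RegularDEF}.

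First, I would fix an arbitrary configuration $x^{*}_{L_{(i,j)}} \in \mathcal{X}_{L_{(i,j)}}$ and consider two cases: either $x^{*}_{L_{(i,j)}} \in \mathcal{L}_{(i,j)}$ already, or it is not. In the first case there is nothing to show. In the second case, I would argue that adding $x^{*}_{L_{(i,j)}}$ to $\mathcal{L}_{(i,j)}$ does not introduce any new local CSI, because the CSI $X_j \perp X_i \mid x^{*}_{L_{(i,j)}}$ is assumed to already hold in the distribution (and is in fact exactly what Definition \ref{Labelled-Directed-Acyclic} declares the label entries to mean). This is the only slightly delicate step, and it is the one I expect to be the main obstacle: one has to be careful to distinguish between a CSI being true in the joint distribution and a CSI being explicitly encoded by a label. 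Maximality is precisely the bridge between these two notions.

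Invoking maximality (Definition \ref{MaximalDEF}), which states that no configuration can be appended to a label without forcing an additional local CSI, the second case is impossible. Therefore every $x^{*}_{L_{(i,j)}} \in \mathcal{X}_{L_{(i,j)}}$ must already lie in $\mathcal{L}_{(i,j)}$, giving
\[
\mathcal{L}_{(i,j)} = \mathcal{X}_{L_{(i,j)}}.
\]

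Finally, this equality directly contradicts the regularity requirement of Definition \ref{RegularDEF}, which demands $\mathcal{L}_{(i,j)} \subsetneq \mathcal{X}_{L_{(i,j)}}$. Hence no such label can induce a full independence between $X_i$ and $X_j$ given $X_{L_{(i,j)}}$, completing the proof. The argument is essentially definition-chasing; the only nontrivial content is recognizing that maximality is what elevates the set-theoretic definition of a label into a faithful enumeration of the distribution's local CSIs, which is exactly what makes the reduction to a regularity-contradiction go through.
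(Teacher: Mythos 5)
Your proof is correct and takes essentially the same route as the paper's: maximality forces every configuration outside $\mathcal{L}_{(i,j)}$ to correspond to a CSI that fails to hold, and regularity guarantees at least one such configuration exists. The paper phrases this directly (exhibiting an $x_{L_{(i,j)}}^{*}$ with $X_{j}\not\perp X_{i}\mid x_{L_{(i,j)}}^{*}$) rather than as a contradiction forcing $\mathcal{L}_{(i,j)}=\mathcal{X}_{L_{(i,j)}}$, but the content is identical.
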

\begin{proof}
Assume that $\mathcal{L}_{(i,j)}$ is a label in a regular maximal LDAG $G_{L}$. The maximality condition ensures that we cannot add configurations to $\mathcal{L}_{(i,j)}$ without altering the dependence structure. This means that $X_{j} \perp X_{i} \mid x_{L_{(i,j)}}$ must hold for $x_{L_{(i,j)}}\in\mathcal{L}_{(i,j)}$ but not for $x_{L_{(i,j)}} \in \mathcal{X}_{L_{(i,j)}}\setminus \mathcal{L}_{(i,j)}$. Due to the regularity condition, we have that $\mathcal{L}_{(i,j)} \subset \mathcal{X}_{L_{(i,j)}}$ and, consequently, $\mathcal{X}_{L_{(i,j)}}\setminus \mathcal{L}_{(i,j)}\not=\varnothing$. It must thereby exist an outcome $x_{L_{(i,j)}}^{*}$ for which $X_{j} \not\perp X_{i} \mid x_{L_{(i,j)}}^{*}$. $\square$
\end{proof}
\begin{figure}
\begin{centering}
\includegraphics[height=2.88cm]{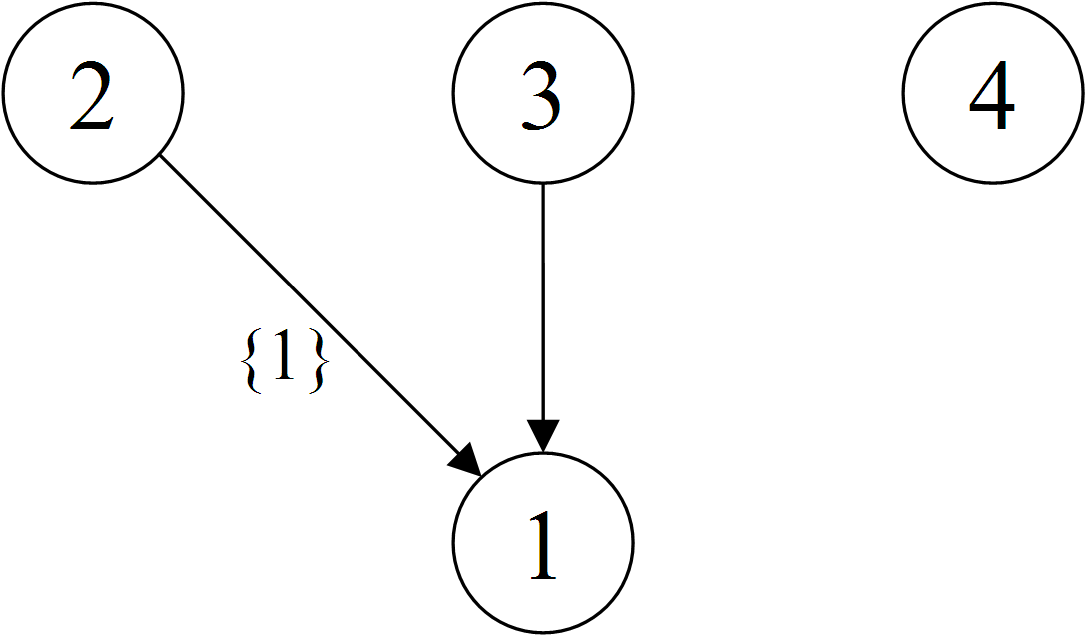}
\caption{Simplified version of the LDAG from Figure \ref{fig:The-maximal-version}.\label{fig:Simplified-version-of}}
\par\end{centering}
\end{figure}
If we consider the non-regular LDAG in Figure \ref{fig:The-maximal-version}, it can be simplified as the maximal regular LDAG in Figure \ref{fig:Simplified-version-of}
without altering the dependence structure. We can restrict our model space to the class of maximal regular LDAGs which is considerably smaller than the class of all LDAGs, without  suffering any loss of generality as the dependence structure of an arbitrary LDAG can be represented by a maximal regular LDAG.  

The independence assertions that can be recovered directly or indirectly from the structure of an LDAG $G_{L}$ can be divided into local and non-local. The local CIs  follow from the directed local Markov property while the local CSIs can be attained from the labels. The set of all local independencies associated with $G_{L}$ will be denoted by $\mathcal{I}_{loc}(G_{L})$. In addition to the local independencies, there are additional non-local independencies which can be derived from $\mathcal{I}_{loc}(G_{L})$. The set of all local and non-local independencies, denoted by $\mathcal{I}(G_{L})$, fully describes the dependence structure of $G_{L}$. However, the dependence structure of an LDAG or $\mathcal{I}{(G_{L})}$ is still ultimately determined by the local independencies  since all non-local independencies are implictly represented by $\mathcal{I}_{loc}(G_{L})$.

Let $P$ denote a distribution over the same set of variables as an LDAG $G_{L}$ and let $\mathcal{I}(P)$ denote the set of CSIs satisfied by $P$. If $P$ factorizes according to $G_{L}$, it must hold that $\mathcal{I}_{loc}(G_{L})\subseteq \mathcal{I}(G_{L}) \subseteq \mathcal{I}(P)$ and $G_{L}$ is called a CSI-map of $P$. There may, however, exist distribution-specific independencies that hold in $P$ even when they are not represented by the structure of $G_{L}$. A distribution $P$ is said to be faithful to $G_{L}$ if equality $\mathcal{I}(G_{L}) = \mathcal{I}(P)$ holds. The LDAG is then called a perfect CSI-map of $P$ and can be considered a true representation in the sense that no artificial dependencies are introduced.
	
The derivation of non-local CIs in ordinary DAGs can be very cumbersome. Instead, non-local CIs can be verified utilizing the concept of \emph{d}-separation. \citet{Boutilier96CSIinBN} introduce a sound counterpart for context-specific independence; CSI-separation. They reduce the problem of checking for CSI-separation by checking for ordinary variable independence in a simpler context-specific graph. To formulate the concept of CSI-separation for LDAGs, the following notions are introduced.
\begin{definition}\textit{Satisfied label}\\
Let $G_{L}=(V,E,\mathcal{L}_{E})$ be an LDAG and $X_{C}=x_{C}$ a context where $C\subseteq V$. In the context $X_{C}=x_{C}$, a label  $\mathcal{L}_{(i,j)}\in\mathcal{L}_{E}$ is satisfied if $L_{(i,j)}\cap C\not= \varnothing$ and $\{x_{L_{(i,j)}\cap C}\}\times \mathcal{X}_{{L}_{(i,j)}\setminus C}\subseteq \mathcal{L}_{(i,j)}$. 
\end{definition}
\begin{definition}\textit{Context-specific LDAG}\\
Let $G_{L}=(V,E,\mathcal{L}_{E})$ be an LDAG. For the context $X_{C}=x_{C}$, where $C\subseteq V$, the context-specific LDAG is denoted by $G_{L}(x_{C})=(V,E\setminus E',\mathcal{L}_{E\setminus E'})$ where $E'= \{ (i,j)\in E:\mathcal{L}_{(i,j)}\ is\ satisfied \}$. The underlying DAG of the context-specific LDAG is denoted by $G(x_{C})=(V,E\setminus E')$.
\end{definition}
A context-specific LDAG is a reduced version of an LDAG where all satisfied edges are removed. CSI-separation can now be defined in a similar manner as in \citet{Boutilier96CSIinBN}.
\begin{definition}\textit{CSI-separation in LDAGs}\\
Let $G_{L}=(V,E,\mathcal{L}_{E})$ be an LDAG and let $A$,$B$,$S$,$C$ be four disjoint subsets of $V$. $X_{A}$ is CSI-separated from $X_{B}$ by $X_{S}$ in the context $X_{C}=x_{C}$ in $G_{L}$, denoted by
\[
X_{A}\perp X_{B}\mid\mid_{G_{L}}x_{C},X_{S},
\] 
if $X_{A}$ is \emph{d}-separated from $X_{B}$ by $X_{S\cup C}$ in $G(x_{C})$.
\end{definition}
If $C=\varnothing$ in the above definition, the method describes the procedure of \emph{d}-separation with respect to the underlying DAG. CSI-separation is proven to be a sound method for verifying CSIs, i.e. 
\[
X_{A}\perp X_{B} \mid\mid_{G_{L}} x_{C},X_{S} \Rightarrow X_{A}\perp X_{B} \mid x_{C},X_{S}.
\]
Unfortunately, it is not complete in the sense that there may arise situations where certain structure induced independencies cannot be discovered directly by the CSI-separation algorithm. \citet{Koller+Friedman:09} noticed that it may be necessary to perform reasoning by cases to recover all independencies reflected by CSI-based structures. If CSI-separation holds for every $x_{C} \in \mathcal{X}_{C}$ in the above definition it will imply conditional independence. 
\begin{figure}
\begin{centering}
\subfloat[\label{fig:specialCase1}]{\includegraphics[height=2.88cm]{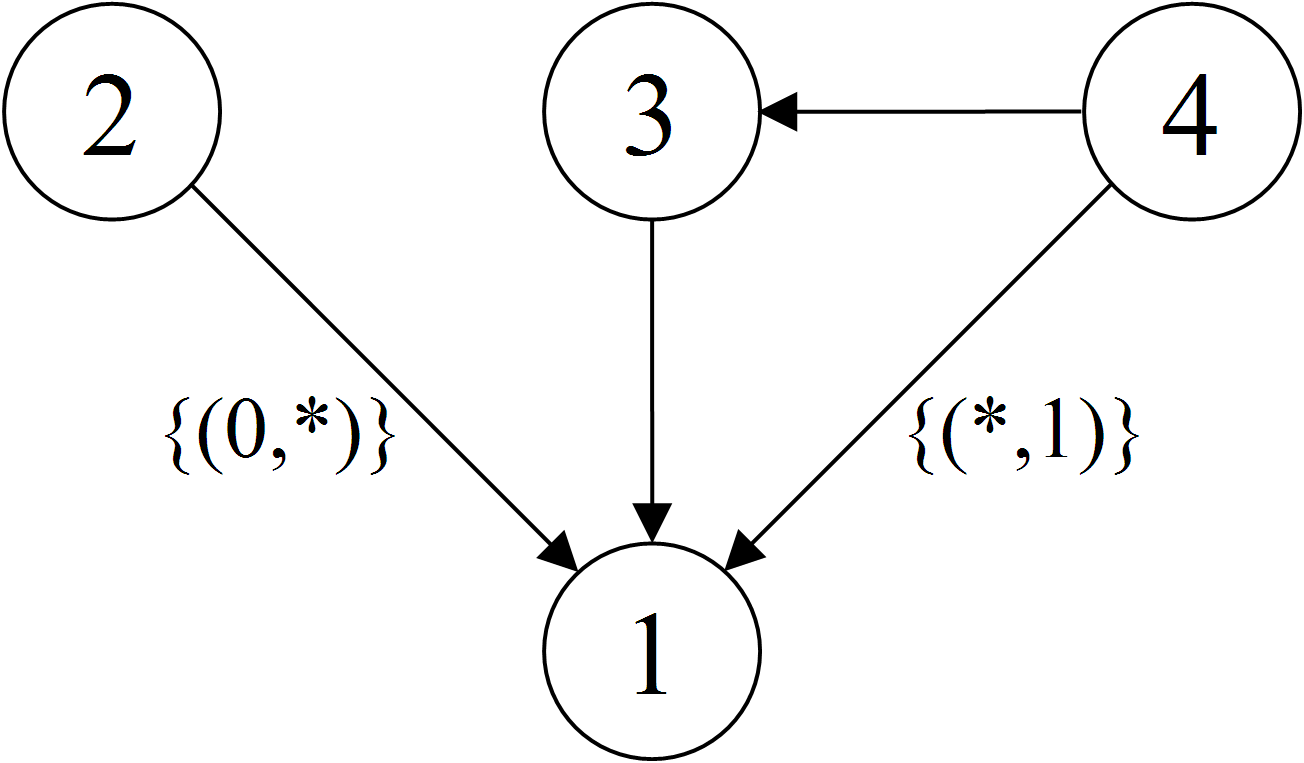}

}~~~~~\subfloat[\label{fig:specialCase2} ]{\includegraphics[height=2.88cm]{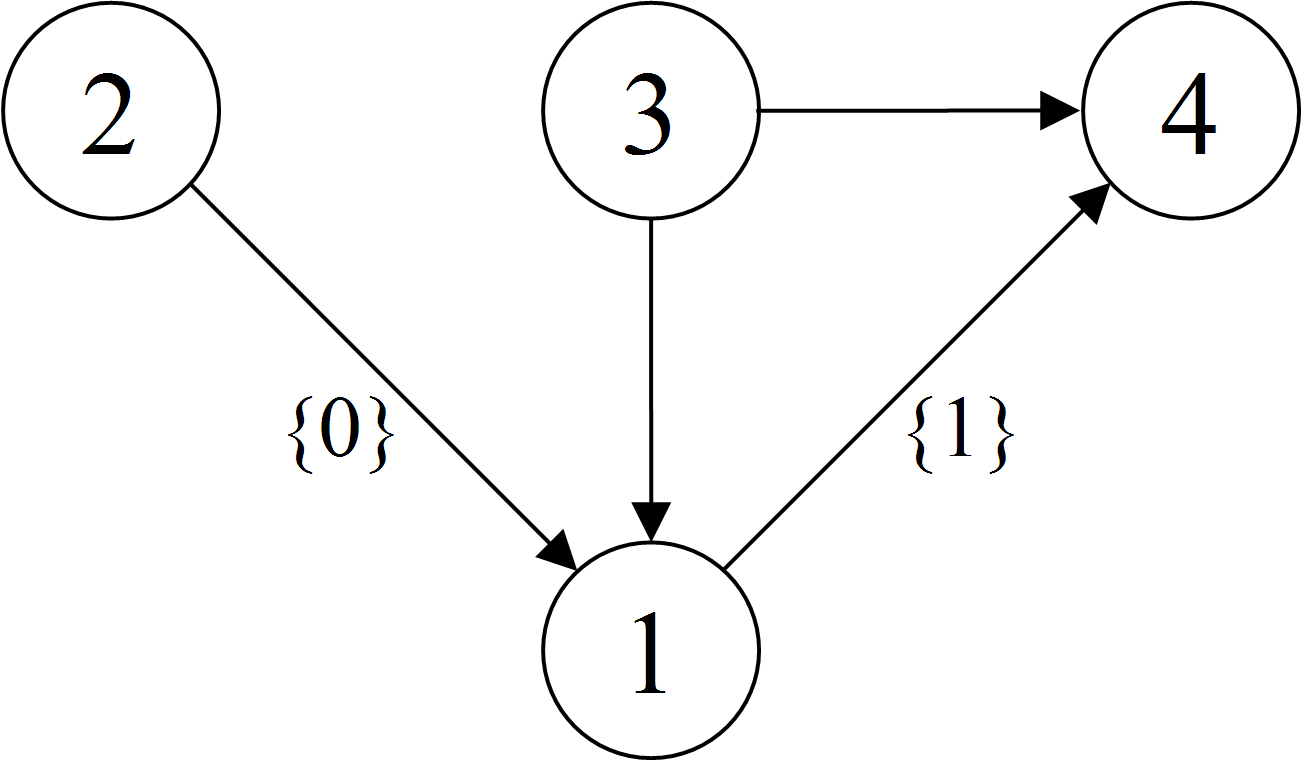}

}
\caption{LDAGs with CI inducing CSI-structures. \label{fig:CSIeqLDAGs}}
\par\end{centering}
\end{figure}
Consider the LDAG in Figure \ref{fig:specialCase1}. When only considering the underlying DAG, it appears that
\[
X_{2}\not\perp X_{4}\mid\mid_{G} X_{1},X_{3}\Rightarrow X_{2}\not\perp X_{4}\mid X_{1},X_{3}
\]
However, through CSI-separation and reasoning by cases we recover
\[
\begin{array}{cc}
X_{2}\perp X_{4}\mid\mid_{G_{L}} X_{1},X_{3}=0 & \Rightarrow X_{2}\perp X_{4}\mid X_{1},X_{3}=0\\
X_{2}\perp X_{4}\mid\mid_{G_{L}} X_{1},X_{3}=1 & \Rightarrow X_{2}\perp X_{4}\mid X_{1},X_{3}=1
\end{array}
\]
which eventually leads us to the conclusion that
\[
X_{2}\perp X_{4}\mid X_{1},x_{3}\ \ \forall x_{3}\in \mathcal{X}_{3}\Leftrightarrow X_{2}\perp X_{4}\mid X_{1},X_{3}.
\]
\emph{d}-separation is based on the notion of active trails, i.e. trails along which information can flow from one variable to another, and a lack of such trails will imply \emph{d}-separation. Labels in an LDAG have the ability to cut off an active trail for a certain context by removing an edge in it and render the trail non-active or blocked in that context. The regularity condition prohibits this from occurring throughout the outcome space for a single edge but certain combinations of labels can still deactivate a trail that appears active when only considering the underlying DAG.

Now consider the LDAG in Figure \ref{fig:specialCase2}. When considering the underlying DAG alone, it appears that
\[
X_{2}\not\perp X_{4}\mid\mid_{G} \varnothing \Rightarrow X_{2}\not\perp X_{4}.
\]
However, we can recover the following CSIs through CSI-separation:
\[
\left. \begin{array}{cc}
X_{2}\perp X_{4}\mid\mid_{G_{L}} X_{3}=0\\ 
X_{2}\perp X_{4}\mid\mid_{G_{L}} X_{3}=1 
\end{array}\right\} \Rightarrow X_{2}\perp X_{4}\mid X_{3}
\]
\[
\begin{array}{cc}
X_{2}\perp X_{3}\mid\mid_{G}\ \varnothing & \Rightarrow X_{2}\perp X_{3}
\end{array}
\]
The first of the CIs must be discovered through reasoning by cases in the same way as in the previous example while the second is easily discovered from the underlying DAG. Combining the CIs leads us indirectly to the conclusion that 
\[
 X_{2}\perp X_{4}
\]
indeed holds due to the structural properties of the LDAG. Several CSI-separation statements work together in order to achieve a non-local independence that is not easily discovered. However, both these situations are special cases that can only arise when the complete outcome space of a subset of variables is split up over several labels.

Earlier we introduced the class of regular maximal LDAGs and concluded that we can restrict the model space to this substantially smaller subclass without loosing any generality. However, in this subclass there still exist large classes of distinct LDAGs that encode equivalent dependence structures. \citet{Heckerman95LearningBN} highlighted a fundamental aspect in that classes of distinct DAGs may determine the same statistical model. Every DAG within such a class will determine the same set of CI restrictions among the variables in the model. \citet{Andersson97MEclass} characterized these so called Markov equivalence (also known as $\mathcal{I}$-equivalence) classes  by concluding that each class corresponds to an essential graph in the form of a chain graph. As for DAGs, the difference between two equivalent LDAGs can occur from reversing non-essential edges. It is worth noting that the criteria for an edge being essential will differ from DAGs. This observation is based on the fact that the direction of the edges determines which local CSIs may be included in an LDAG.  

All DAGs within the same Markov equivalence class share the same dependence structure or $\mathcal{I}(G)$. Correspondingly, we now define CSI-equivalence for LDAGs.
\begin{definition}\textit{CSI-equivalence for LDAGs}\\
Let $G_{L}=(V,E,\mathcal{L}_E)$ and $G_{L}^{*}=(V,E^{*},\mathcal{L}_{E}^{*})$ be two distinct regular maximal LDAGs. The LDAGs are said to be CSI-equivalent if $\mathcal{I}({G_{L}})=\mathcal{I}(G_{L}^{*})$. A set containing all CSI-equivalent LDAGs forms a CSI-equivalence class.
\end{definition}
In the remainder of this section we will discuss some structural properties that two distinct LDAGs must fulfill to belong to the same CSI-equivalence class. We begin by considering the underlying DAG.
\begin{theorem}\label{The-skeleton-condition}
Let $G_{L}=(V,E,\mathcal{L}_{E})$ and $G_{L}^{*}=(V,E^{*},\mathcal{L}_{E}^{*})$ be two regular maximal LDAGs belonging to the same CSI-equivalence class. Their underlying DAGs $G=(V,E)$ and $G^{*}=(V,E^{*})$ must then have the same skeleton. 
\end{theorem}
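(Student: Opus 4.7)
The plan is to argue by contradiction. Suppose $G_L$ and $G_L^\ast$ are CSI-equivalent but have distinct skeletons. Then, without loss of generality, there exist nodes $i,j$ and an orientation, say $(i,j)\in E$, such that neither $(i,j)$ nor $(j,i)$ belongs to $E^\ast$; the task is to exhibit an independence assertion that lies in $\mathcal{I}(G_L^\ast)$ but not in $\mathcal{I}(G_L)$, contradicting $\mathcal{I}(G_L)=\mathcal{I}(G_L^\ast)$.

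The first step borrows the classical skeleton lemma for ordinary DAGs. Since $i$ and $j$ are non-adjacent in $G^\ast$, every trail between them must pass through an intermediate node, and conditioning on all remaining nodes $V\setminus\{i,j\}$ blocks every such trail via the non-collider clause of d-separation. Hence $X_i\perp X_j\mid X_{V\setminus\{i,j\}}$ lies in $\mathcal{I}(G^\ast)\subseteq\mathcal{I}(G_L^\ast)$, and by restricting the conditioning to any fixed configuration, every CSI of the form $X_i\perp X_j\mid x_{V\setminus\{i,j\}}$ must also lie in $\mathcal{I}(G_L^\ast)$, and therefore by CSI-equivalence in $\mathcal{I}(G_L)$ as well.

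The second step exploits regularity of $G_L$ to pick a configuration that keeps the edge $i\to j$ active in the appropriate context-specific LDAG. If $L_{(i,j)}=\varnothing$, there is no label on the edge and $i\to j$ is present in $G(x_{V\setminus\{i,j\}})$ for every context. Otherwise Theorem \ref{regmax edgerem} guarantees some $x_{L_{(i,j)}}^\ast\in\mathcal{X}_{L_{(i,j)}}\setminus\mathcal{L}_{(i,j)}$, which I would extend arbitrarily to a configuration $x_{V\setminus\{i,j\}}^\ast$. Because $L_{(i,j)}\subseteq V\setminus\{i,j\}$, the label-satisfaction criterion reduces to asking whether $x_{L_{(i,j)}}^\ast\in\mathcal{L}_{(i,j)}$, which fails by construction. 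Consequently the edge $i\to j$ survives in $G(x_{V\setminus\{i,j\}}^\ast)$, the nodes $i,j$ remain adjacent in this context-specific DAG, and no set can d-separate them there, so CSI-separation fails to certify the CSI $X_i\perp X_j\mid x_{V\setminus\{i,j\}}^\ast$. Since the context is already fully specified, reasoning by cases adds nothing further, so the CSI is not in $\mathcal{I}(G_L)$, yielding the desired contradiction.

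The delicate point I anticipate is justifying that this CSI is genuinely absent from $\mathcal{I}(G_L)$ rather than merely unrecoverable by the CSI-separation algorithm. I would lean on the faithfulness viewpoint introduced just before the theorem: the surviving edge $i\to j$, coupled with $x_{L_{(i,j)}}^\ast\notin\mathcal{L}_{(i,j)}$, forces the local CPD $p(X_j\mid X_{\Pi_j})$ to depend genuinely on $X_i$ at the context $x_{L_{(i,j)}}^\ast$ in every distribution faithful to $G_L$, so no structural consequence of $G_L$ can imply $X_i\perp X_j\mid x_{V\setminus\{i,j\}}^\ast$. This closes the argument and, by symmetry in $G_L$ and $G_L^\ast$, forces the skeletons to coincide.
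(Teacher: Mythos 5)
Your overall strategy---reduce the claim to Theorem \ref{regmax edgerem} by showing that a missing edge in one skeleton forces an independence that the surviving, regular-maximal edge in the other graph cannot tolerate---is exactly the reduction the paper intends (its proof is literally ``a direct consequence of Theorem \ref{regmax edgerem}''). However, your first step contains a genuine error. It is not true in a DAG that two non-adjacent nodes $i,j$ are d-separated by $V\setminus\{i,j\}$: the non-collider clause blocks trails through conditioned non-colliders, but conditioning on a collider \emph{activates} the trail through it. The three-node graph $i\rightarrow k\leftarrow j$ is a counterexample: $i$ and $j$ are non-adjacent, yet $X_i\not\perp X_j\mid X_k$ for generic parameters, so the CI $X_i\perp X_j\mid X_{V\setminus\{i,j\}}$ you place in $\mathcal{I}(G_L^{*})$ need not be there at all, and no contradiction is obtained. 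The correct separating set comes from the directed local Markov property: since $G^{*}$ is acyclic, at least one of the two nodes, say $j$, has $i$ among its non-descendants, and non-adjacency gives $i\notin\Pi_j^{*}$, whence $X_j\perp X_i\mid X_{\Pi_j^{*}}$ lies in $\mathcal{I}(G_L^{*})$.

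Once the conditioning set is corrected to $\Pi_j^{*}$, your second step also needs repair, because the context you must refute ranges over $\Pi_j^{*}$ (the parents in $G^{*}$) while Theorem \ref{regmax edgerem} hands you a configuration of $L_{(i,j)}=\Pi_j\setminus\{i\}$ (the parents in $G$), and these sets generally differ; some bridging argument is needed to show that the genuine dependence $X_j\not\perp X_i\mid x^{*}_{L_{(i,j)}}$ is incompatible with $X_j\perp X_i\mid X_{\Pi_j^{*}}$ holding in every distribution factorizing according to $G_L$. Your instinct at the end---that failure of CSI-separation alone does not certify absence from $\mathcal{I}(G_L)$, and that one must invoke a faithful distribution in which the surviving edge carries a real dependence---is sound and is the right way to close that gap (the paper itself only assumes existence of faithful distributions explicitly in Theorem \ref{MEclass}). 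So the skeleton of your argument matches the paper's, but as written the proof does not go through because of the incorrect d-separation claim.
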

\begin{proof}
This theorem is a direct consequence of Theorem \ref{regmax edgerem}. $\square$
\end{proof}
Next we introduce a criterion that ties together the concept of CSI-equivalence among LDAGs and the concept of Markov equivalence among DAGs.
\begin{theorem}\label{MEclass}
Let $G_{L}=(V,E,\mathcal{L}_{E})$ and $G_{L}^{*}=(V,E^{*},\mathcal{L}_{E}^{*})$ be two maximal regular LDAGs for which there exists distributions $P$ and $P^{*}$ such that $\mathcal{I}(G_{L})=\mathcal{I}(P)$ and $\mathcal{I}(G_{L}^{*})=\mathcal{I}(P^{*})$. $G_{L}$ and $G_{L}^{*}$ are CSI-equivalent if and only if their corresponding context-specific LDAGs $G_{L}(x_{V})=G(x_{V})$ and $G_{L}^{*}(x_{V})=G^{*}(x_{V})$ are Markov equivalent for all $x_{V} \in \mathcal{X}_{V}$.
\end{theorem}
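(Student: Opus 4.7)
The plan is to prove both directions by translating between CSI-separation statements in the LDAGs and d-separation statements in the context-specific DAGs $G(x_V)$ and $G^{*}(x_V)$, and to invoke the faithfulness of $P$ and $P^{*}$ to bridge the gap between structural derivability and distributional independence.

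For the sufficiency direction ($\Leftarrow$), assume $G(x_V)$ and $G^{*}(x_V)$ are Markov equivalent for every $x_V \in \mathcal{X}_V$. Take any CSI $\sigma = (X_A \perp X_B \mid x_C, X_S) \in \mathcal{I}(G_L)$. Using reasoning by cases over $X_S$, $\sigma$ is equivalent to the family $\{X_A \perp X_B \mid x_C, x_S\}_{x_S \in \mathcal{X}_S}$ of CSIs with enlarged context; each such CSI decomposes further over the remaining free variables to yield, for every full extension $x_V$ of $(x_C, x_S)$, an atomic CSI of the form $X_A \perp X_B \mid x_{V\setminus(A\cup B)}$. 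By the definition of CSI-separation, each atomic CSI corresponds to a d-separation of $A$ and $B$ in $G(x_V)$. Markov equivalence transfers this d-separation to $G^{*}(x_V)$, producing the analogous CSI in $\mathcal{I}(G_L^{*})$; reassembling by reasoning by cases gives $\sigma \in \mathcal{I}(G_L^{*})$, and the symmetric argument yields $\mathcal{I}(G_L) = \mathcal{I}(G_L^{*})$.

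For the necessity direction ($\Rightarrow$), fix $x_V$ and verify that $G(x_V)$ and $G^{*}(x_V)$ share skeleton and v-structures. Skeleton equality follows from Theorem~\ref{The-skeleton-condition}, which gives equal skeletons of the underlying DAGs $G$ and $G^{*}$, combined with the observation that an edge $(i,j)$ is deleted from $G(x_V)$ iff its label is satisfied at $x_V$, which by the definition of a label and the maximality of $G_L$ is equivalent to $X_j \perp X_i \mid x_{L_{(i,j)}} \in \mathcal{I}(G_L)$; CSI-equivalence transfers this to $\mathcal{I}(G_L^{*})$, and maximality of $G_L^{*}$ forces the analogous edge to be removed from $G^{*}(x_V)$. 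Agreement on v-structures is obtained by characterizing each candidate v-structure $i \to k \leftarrow j$ in $G(x_V)$ by the d-separation pattern ``$X_i$ and $X_j$ are d-separated by some set not containing $k$ but d-connected once $k$ is added''; both halves of this pattern translate through CSI-separation into explicit (non-)CSIs in $\mathcal{I}(G_L)$, and CSI-equivalence together with faithfulness of $P^{*}$ guarantees that the same pattern is realized structurally in $G^{*}(x_V)$.

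\textbf{Main obstacle.} CSI-separation is sound but incomplete, as illustrated by Figures~\ref{fig:specialCase1} and \ref{fig:specialCase2}: some CSIs in $\mathcal{I}(G_L)$ only arise through reasoning by cases rather than from a single d-separation statement. This prevents a naive one-step identification between membership in $\mathcal{I}(G_L)$ and d-separation in a particular $G(x_V)$. I would circumvent this by leveraging the faithfulness assumptions $\mathcal{I}(G_L) = \mathcal{I}(P)$ and $\mathcal{I}(G_L^{*}) = \mathcal{I}(P^{*})$, which identify the structural CSI closure with a distributional property, and by using that for each full context $x_V$ the graph $G(x_V)$ is an ordinary DAG whose d-separation is complete under faithful distributions. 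The technically most delicate step is v-structure agreement in the necessity direction, because the underlying DAGs $G$ and $G^{*}$ may orient some shared edges differently while still being CSI-equivalent; showing that the context-specific DAGs nonetheless share v-structures for every $x_V$ requires a careful case analysis based on which labels are satisfied and which edge orientations survive in each context.
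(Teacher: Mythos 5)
Your necessity direction is essentially the paper's argument: skeleton agreement of the context-specific graphs follows from Theorem~\ref{The-skeleton-condition} together with the observation that an edge is removed from $G(x_V)$ iff the corresponding local CSI holds, which CSI-equivalence and maximality force to agree between the two LDAGs; disagreement in immoralities is ruled out by exhibiting an independence (a CI via a separating set when $i$ and $k$ are non-adjacent, a local CSI when they are adjacent) that would lie in one of $\mathcal{I}(G_L)$, $\mathcal{I}(G_L^{*})$ but not the other. That half is sound, modulo the v-structure details you rightly flag as delicate.

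The sufficiency direction, however, contains a genuine error. You decompose a CSI $X_A \perp X_B \mid x_C, X_S \in \mathcal{I}(G_L)$ first over $x_S$ (acceptable for positive distributions) and then ``further over the remaining free variables'' into atomic statements $X_A \perp X_B \mid x_{V\setminus(A\cup B)}$ for every full extension $x_V$. This second step is false: conditional independence is not preserved under enlargement of the conditioning set (conditioning on a common effect of $X_A$ and $X_B$, or on a descendant of one, can destroy the independence), so a CSI in $\mathcal{I}(G_L)$ need not yield any full-context atomic CSI, and the reassembly inside $\mathcal{I}(G_L^{*})$ collapses. Your proposed remedy --- faithfulness plus completeness of $d$-separation in each $G(x_V)$ --- does not repair this, because the obstruction is not the incompleteness of CSI-separation but the invalid decomposition itself. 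The paper sidesteps the whole issue by arguing at the level of the distribution rather than statement-by-statement: take $P$ with $\mathcal{I}(G_L)=\mathcal{I}(P)$; each joint probability $p(X_V=x_V)$ factorizes according to $G_L(x_V)$, and Markov equivalence of $G_L(x_V)$ and $G_L^{*}(x_V)$ allows each such probability to be refactorized according to $G_L^{*}(x_V)$ without altering $P$ or introducing new dependencies, so $G_L^{*}$ is also a perfect CSI-map of $P$ and $\mathcal{I}(G_L)=\mathcal{I}(P)=\mathcal{I}(G_L^{*})$. You need this, or some equivalent distribution-level argument, to make sufficiency go through.
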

\begin{proof}
Let $G_{L}=(V,E,\mathcal{L}_{E})$ and $G_{L}^{*}=(V,E^{*},\mathcal{L}_{E}^{*})$ be two maximal regular LDAGs for which there exists distributions $P$ and $P^{*}$ such that $\mathcal{I}(G_{L})=\mathcal{I}(P)$ and $\mathcal{I}(G_{L}^{*})=\mathcal{I}(P^{*})$. 

($\Rightarrow$) Assume that $G_{L}$ and $G_{L}^{*}$ are CSI-equivalent. Assume further that there exists a joint outcome $x_{V}\in \mathcal{X}_{V}$ for which $G(x_{V})$ and $G^{*}(x_{V})$ are not Markov equivalent, i.e. they have different (1) skeletons or (2) immoralities. (1) If they have different skeletons, there exists an edge $\{ i,j\}$ in, say, the skeleton of $G(x_{V})$ that does not exist in the skeleton of $G^{*}(x_{V})$. Due to Theorem \ref{The-skeleton-condition}, the underlying DAGs must have the same skeleton. The lack of the edge in $G^{*}(x_{V})$ implies that a local CSI of the form $X_{j}\perp X_{i}\mid x_{L_{(i,j)}}$ holds in $G_{L}^{*}$ but not in $G_{L}$. (2) Assume that there exists an immorality $i \rightarrow j \leftarrow k$ in, say, $G(x_{V})$ that does not exist in $G^{*}(x_{V})$. If there does not exist an edge between $i$ and $k$ in $G_{L}$ (and $G_{L}^{*}$), there must exist some $S\subseteq V\setminus \{i,j,k\}$ for which $X_{i}\perp X_{k}\mid\mid_{G}X_{S}$ and consequently $\{X_{i}\perp X_{k}\mid X_{S}\}\in\mathcal{I}(G_{L})$ while $\{X_{i}\perp X_{k}\mid X_{S}\} \not\in\mathcal{I}(G_{L}^{*})$ since there exists at least one active trail between $X_{i}$ and $X_{k}$ via $X_{j}$. If there exists an edge between $i$ and $k$ in $G_{L}$ (and $G_{L}^{*}$), there must exist a local CSI of the form $X_{i}\perp X_{k}\mid x_{L_{(k,i)}}$ (or $X_{k}\perp X_{i}\mid x_{L_{(i,k)}}$) that holds in $G_{L}^{*}$ but not in $G_{L}$ since $j\in L_{(k,i)}$ (or $j\in L_{(i,k)}$) in $G^{*}_{L}$ while $j\not\in L_{(k,i)}$ (and $j\not\in L_{(i,k)}$) in $G_{L}$. (1) and (2) allow us to conclude that $\mathcal{I}({G_{L}})\not=\mathcal{I}(G_{L}^{*})$ which contradicts the assumption of $G_{L}$ and $G_{L}^{*}$ being CSI-equivalent. $G_{L}(x_{V})$ and $G_{L}^{*}(x_{V})$ must be Markov equivalent for all $x_{V} \in \mathcal{X}_{V}$. 

($\Leftarrow$) Assume that $G_{L}(x_{V})$ and $G_{L}^{*}(x_{V})$ are Markov equivalent for all $x_{V} \in \mathcal{X}_{V}$. Let $P$ be a distribution for which $G_{L}$ is a perfect CSI-map. Each joint probability $p(X_{V}=x_{V})$ factorizes according to $G_{L}(x_{V})$. Since $G_{L}(x_{V})$ and $G_{L}^{*}(x_{V})$ are Markov equivalent, we can refactorize each joint probability $p(X_{V}=x_{V})$ according to $G_{L}^{*}(x_{V})$ without altering the joint distribution or inducing any additional dependencies. This means that $G_{L}^{*}$ is also a perfect CSI-map of $P$. Since $\mathcal{I}({G_{L}})=\mathcal{I}(P)=\mathcal{I}(G_{L}^{*})$, we can conclude that $G_{L}$ and $G_{L}^{*}$ are CSI-equivalent. $\square$
\end{proof}
From this theorem it is clear that the LDAGs in Figure \ref{fig:CSIeqLDAGs} are indeed CSI-equivalent even if some of the independencies are not so obvious. In order to check CSI-equivalence between two LDAGs, it suffices to compare context-specific graphs for only a subset of variables since not all will affect the structure of the graphs. Furthermore, all outcomes for which no labels in either graph are satisfied, need only to be checked once as the context-specific graphs in all these cases are equal to the underlying DAG. This last observation leads to a more strict version of Theorem \ref{The-skeleton-condition} given that a specific condition is satisfied.
\begin{theorem}
Let $G_{L}=(V,E,\mathcal{L}_{E})$ and $G_{L}^{*}=(V,E^{*},\mathcal{L}_{E}^{*})$ be two regular maximal LDAGs that are CSI-equivalent and let their labelings be such that there exists at least one joint outcome, $x_{V}\in\mathcal{X}_{V}$, for which no label is satisfied. Their underlying DAGs $G=(V,E)$ and $G^{*}=(V,E^{*})$ must then be Markov equivalent.
\end{theorem}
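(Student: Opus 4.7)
The theorem is essentially a corollary of Theorem \ref{MEclass}, so the plan is to invoke that result and read off the consequence at the particular context $x_V$ guaranteed by the hypothesis. First I would write out what the hypothesis buys us: there is some joint outcome $x_V \in \mathcal{X}_V$ at which no label of $G_L$ or $G_L^*$ is satisfied, so the set $E'$ of removed edges is empty in both LDAGs at this context. By the definition of a context-specific LDAG, this means $G_L(x_V) = G$ and $G_L^*(x_V) = G^*$, i.e.\ the two context-specific graphs at $x_V$ coincide with the underlying DAGs.

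Next I would invoke the $(\Rightarrow)$ direction of Theorem \ref{MEclass}. Since $G_L$ and $G_L^*$ are assumed CSI-equivalent and both are regular and maximal, that theorem tells us the context-specific LDAGs $G_L(x_V)$ and $G_L^*(x_V)$ are Markov equivalent for every $x_V \in \mathcal{X}_V$. Applying this at the particular outcome identified in the previous paragraph, Markov equivalence of $G_L(x_V)=G$ and $G_L^*(x_V)=G^*$ yields precisely the desired Markov equivalence of the underlying DAGs.

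The only subtlety worth flagging is the standing hypothesis in Theorem \ref{MEclass} that there exist distributions $P$ and $P^*$ faithful to $G_L$ and $G_L^*$ respectively. This is not stated explicitly in the current theorem, but it is the natural setting in which CSI-equivalence is being studied in this subsection; I would either assume it implicitly (as the rest of the text does) or remark briefly that faithful distributions can be constructed by standard parameter-genericity arguments for discrete graphical models. No further structural analysis — of skeletons, immoralities, or label interaction — is needed, because Theorem \ref{MEclass} has already done this work and we are merely specializing it to a context where every edge of the underlying DAG survives. Consequently there is no real obstacle; the task is chiefly to state the specialization cleanly and verify the simple identification $G_L(x_V)=G$, $G_L^*(x_V)=G^*$.
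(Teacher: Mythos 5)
Your proposal is correct and matches the paper's argument, which simply states that the theorem is a direct consequence of Theorem \ref{MEclass}; your elaboration of why (the context-specific graphs at the unsatisfied outcome coincide with the underlying DAGs) is exactly the intended specialization. The remark about the faithful-distribution hypothesis is a fair observation but does not change the substance.
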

\begin{proof}
This theorem is a direct consequence of Theorem \ref{MEclass}. $\square$
\end{proof}

\section{Bayesian learning of LDAGs by non-reversible MCMC\label{sec:Bayesian-learning-of-LDAGs}}
This section will attend the intricate problem of learning the LDAG structure from a set of data. This poses some obvious problems due to the extremely vast model space as well as some additional not so obvious problems due to the flexibility of the models. We introduce a structural learning method that utilizes a non-reversible Markov Chain Monte Carlo (MCMC) method combined with greedy hill climbing. Such a combination of a stochastic and a deterministic algorithm provides solid performance with a reasonable time complexity. A Bayesian score is used to evaluate the appropriateness of an LDAG given a set of observed data. In order to prevent overfitting, we impose a prior distribution that allows us to balance the ability of an LDAG to match the available learning data with its complexity. We begin with some additional notations.

Let $\mathbf{X}=\{\mathbf{x}_{i}\}_{i=1}^{n}$ denote a set of training data consisting of $n$ observations $\mathbf{x}_{i}=(x_{i1},\ldots x_{id})$ of the variables $\{X_{1},\ldots,X_{d}\}$ such that $\mathbf{x}_{i}\in\mathcal{X}$. We assume that $\mathbf{X}$ is complete in the sense that it contains no missing values. We denote an LDAG by $G_{L}$ and $\mathcal{G}_{L}$ denotes the set of all regular maximal LDAGs. We let $\Theta_{G_{L}}$ denote the parameter space induced by an LDAG and $dim(\Theta_{G_{L}})$ denotes the number of free parameters spanning the parameter space. An instance $\theta\in\Theta_{G_{L}}$ corresponds to a specific joint distribution that factorizes according to the LDAG $G_{L}$. The CSI-consistent partition of the outcome space $\mathcal{X}_{\Pi_{j}}$ is denoted by $\mathcal{S}_{\Pi_{j}}=\{S_{j1},\ldots,S_{jk_{j}}\}$ where $k_{j}=|\mathcal{S}_{\Pi_{j}}|$ is the number of outcome classes. We let $r_{j}=|\mathcal{X}_{j}|$ and $q_{j}=|\mathcal{X}_{\Pi_{j}}|$ denote the cardinality of the outcome space of variable $X_{j}$ and its parents $X_{\Pi_{j}}$, respectively. Finally, we use $n(x_{ij}\times S_{jl})$ to denote the total count of the configurations $\{x_{ij}\}\times S_{jl}$ in $\mathbf{X}$.

In the Bayesian approach to model learning, one considers the posterior distribution of the models given some data,  
\begin{equation}
p(G_{L}\mid\mathbf{X})=\frac{p(\mathbf{X}\mid G_{L})\cdot p(G_{L})}{\underset{G_{L}\in\mathcal{G}_{L}}{\sum}p(\mathbf{X}\mid G_{L})\cdot p(G_{L})}.\label{eq:postdistr}
\end{equation}
Here $p(\mathbf{X}\mid G_{L})$ is the marginal probability of observing the data $\mathbf{X}$ (evidence) given a specific LDAG $G_{L}$ and $p(G_{L})$ denotes the prior probability of the LDAG. The denominator is a normalizing constant that does not depend on $G_{L}$ and it can be ignored for the purpose of comparing particular graphs. Our main interest is to find the maximum a posteriori model, i.e. the solution to
\begin{equation}
\arg\underset{{\scriptstyle G_{L}\in\mathcal{G}_{L}}}{\max}p(\mathbf{X}\mid G_{L})\cdot p(G_{L}).\label{eq:argmax}
\end{equation}
To evaluate $p(\mathbf{X}\mid G_{L})$, we need to consider all possible instances of the parameter vector satisfying the independencies encoded by the LDAG and weight them with respect to a prior according to 
\begin{equation}
p(\mathbf{X}\mid G_{L})=\underset{{\scriptstyle \theta\in\Theta_{G_{L}}}}{\int}p(\mathbf{X}\mid G_{L},\theta)\cdot f(\theta\mid G_{L})d\theta,\label{eq:MLintegral}
\end{equation}
where $p(\mathbf{X}\mid G_{L},\theta)$ and $f(\theta\mid G_{L})$ are the respective likelihood function and prior distribution over the parameters, given the graph $G_{L}$.

Under certain assumptions, (\ref{eq:MLintegral}) can be solved analytically for DAGs, see \citet{CH92IndProbNet} and \citet{Buntine91TheoryRef}. \citet{Heckerman95LearningBN} identify and discuss the assumptions in detail. \citet{Friedman96learnBNlocstruct} and \citet{Chickering97abayesian} derive a corresponding closed-form expression for structures based on CPT-trees and decision graphs, respectively. Since an LDAG induces partitionings of the parental outcome spaces in a similar manner as these previous works, the marginal likelihood of an LDAG can be expressed as  
\begin{equation}
p(\mathbf{X}\mid G_{L})=\overset{{\scriptstyle d}}{\underset{{\scriptstyle j=1}}{\prod}}\overset{{\scriptstyle k_{j}}}{\underset{{\scriptstyle l=1}}{\prod}}{ \frac{\Gamma\left(\sum_{i=1}^{r_{j}}\alpha_{ijl}\right)}{\Gamma\left(n(S_{jl})+\sum_{i=1}^{r_{j}}\alpha_{ijl}\right)}}\overset{{\scriptstyle r_{j}}}{\underset{{\scriptstyle i=1}}{\prod}}\frac{{ \Gamma\left(n(x_{ji}\times S_{jl})+\alpha_{ijl}\right)}}{{ \Gamma\left(\alpha_{ijl}\right)}},\label{eq:ML LDAG}
\end{equation}
where $n(\cdot)$ is the count defined earlier and the $\alpha_{ijl}$:s are hyperparameters (also known as pseudocounts) defining a collection of local Dirichlet distributions. The hyperparameters  characterize our prior belief about the CPDs and must be established to evaluate (\ref{eq:ML LDAG}). \citet{Buntine91TheoryRef} defines a non-informative prior for ordinary Bayesian networks. As each joint outcome is equally likely for this prior, it ensures that equivalent networks are evaluated equally by the marginal likelihood. Under some additional assumptions, \citet{Heckerman95LearningBN} showed that likelihood equivalence can also be achieved by deriving each $\alpha_{ijl}$ from a prior network. The priors discussed in \citet{Friedman96learnBNlocstruct} and \citet{Chickering97abayesian} extend this idea to structures based on compact representations of the CPDs. We define our prior by setting
\begin{equation}
\alpha_{ijl}=\frac{N}{r_{j}\cdot q_{j}}\cdot|S_{jl}|,\label{eq:hyppara}
\end{equation}
where $q_{j}$ is with respect to the underlying DAG and $|S_{jl}|$ denotes the number of configurations in that specific part. This non-informative prior can thus be considered an extension of the one used in \citet{Buntine91TheoryRef} which in turn is a special case of \citet{Heckerman95LearningBN}. The parameter $N$, known as the equivalent sample size, reflects the strength of our prior belief on the parameter distributions. Its effect on the choice of Bayesian network structures has been investigated by \citet{Silander07ESS}. 

The only remaining issue at this point is to define the prior distribution over the set of LDAGs. This part of (\ref{eq:postdistr}) is generally not given too much attention in Bayesian model learning but for LDAGs it plays a vital role. A common approach is to assume a uniform prior and simply base the scoring function on the marginal likelihood alone. A uniform prior has been shown to work quite well for ordinary DAGs. \citet{Chickering97abayesian} use this prior as their main focus is to maximize the marginal likelihood rather than looking at criteria such as predictive performance or structural differences from a generative model. However, they also propose another prior that penalizes complexity of a model in terms of the number of free parameters,
\begin{equation}
p(G_{L})\propto \kappa^{dim(\Theta_{G_{L}})}=\overset{{\scriptstyle d}}{\underset{{\scriptstyle j=1}}{\prod}}\kappa^{dim(\Theta_{G_{L}}(j))}\label{eq:priorParaPen}
\end{equation}
where $\kappa \in (0,1]$.This approach is more in line with the one used by \citet{Friedman96learnBNlocstruct} who use Kullback-Leibler divergence to further analyze the models chosen according to the scoring function.

The choice of model prior turns out to be an essential part of the Bayesian scoring function for LDAGs. We show in the result section that the marginal likelihood alone has a tendency to overfit the dependence structure for limited sample sizes by favoring dense graphs with complex labelings. The number of free parameters associated with such a LDAG is low compared to the number of free parameters associated with its underlying DAG and the LDAG is said to have a high CSI-complexity. The overfitting effect is thus reflected through a high CSI-complexity rather than an excessive number of free parameters. Although high CSI-complexity models may lead to high marginal likelihoods, they are more prone to contain false dependencies and thereby fail to capture the true global dependence structure. This has a direct negative effect on their out-of-sample predictive performance. Another drawback is that their high density will yield bulky CPDs in (\ref{eq:facDAG}). This basically counteracts the fundamental idea of modularity on which the concept of graphical models is based.   

The overfitting phenomenon vanishes asymptotically when $n\rightarrow\infty$, since maximization of the marginal likelihood leads to a consistent estimator of the model structure. Consequently, we construct our prior such that it acts as a regularizer for smaller sample sizes and its effect will gradually vanish as the sample size is increased,
\begin{equation}
p(G_{L})\propto\kappa^{dim(\Theta_{G})-dim(\Theta_{G_{L}})}=\overset{{\scriptstyle d}}{\underset{{\scriptstyle j=1}}{\prod}}\kappa^{dim(\Theta_{G(j)})-dim(\Theta_{G_{L}(j)})}\label{eq:priorLDAG},
\end{equation}
where $dim(\Theta_{G_{L}})$ and $dim(\Theta_{G})$ are the number of free parameters associated with the LDAG and its underlying DAG, respectively. The parameter $\kappa\in(0,1]$ can be considered a measure of how strongly a CSI inducing label configuration must be supported by the data in order for it to be included in the model. For small values on $\kappa$, addition of a label configuration increases the score only if its associated CSI is firmly supported by the data while $\kappa=1$ corresponds to a uniform prior. This prior is similar to (\ref{eq:priorParaPen}) but with the important distinction that the penalty degree is now determined by CSI-complexity rather than complexity in terms of number of free parameters which is implicitly restrained by the marginal likelihood. Regardless of the structure of the underlying DAG, all LDAGs with the same amount of label induced CSIs will thus have the same prior probability. This is motivated by the fact that we do not know the true global dependence structure, i.e. the underlying DAG. Instead we adjust our prior belief in how high degree of CSI-complexity the data is able to faithfully express without imposing false dependencies.

Our prior shares some desirable properties with the marginal likelihood (\ref{eq:ML LDAG}). Given Markov equivalent underlying DAGs, all CSI-equivalent LDAGs are evaluated equally. When considering two equivalent LDAGs with non-equivalent underlying DAGs, the prior will favor the one with lower CSI-complexity. This is, however, the one to be preferred as it has a simpler interpretation. Another important property of (\ref{eq:priorLDAG}) is that it decomposes variable-wise. From a computational perspective, this greatly enhances the efficiency of the search algorithm introduced later. On the downside, an unavoidable issue with an adjustable prior (or regularizer) is the task of determining the optimal value of some tuning parameter (in our case $\kappa$). In the end of this section we propose a cross-validation-based method which allows us to choose among several values on $\kappa$ before the actual model learning. 

\begin{algorithm}
\textbf{Procedure} Optimize-Local-Structure(

\hspace{1cm}$X_{j}$,\hspace{1cm}//\emph{Variable whose local structure is optimized}

\hspace{1cm}$X_{\Pi_{j}}$,\hspace{0.76cm}//\emph{Parental variables}

\hspace{1cm}$\mathbf{X}$,\hspace{1.08cm}//\emph{A set of complete data over $X_{\Pi_{j} \cup \{ j\}}$}

\hspace{1cm})

1:\ \ $\mathcal{L}_{j}=\{ \mathcal{L}_{(i,j)} \}_{i \in \Pi_{j}} \leftarrow \varnothing$

2:\ \ $keepClimb \leftarrow True$

3:\ \ \textbf{while} $keepClimb$

4: \hspace{0.5cm}\ \ $\mathcal{L}_{j}^{top}\leftarrow \mathcal{L}_{j}$

5: \hspace{0.5cm}\ \ \textbf{for} $x_{L_{(i,j)}}\notin\mathcal{L}_{j}:\{x_{L_{(i,j)}}\cup \mathcal{L}_{(i,j)}\} \subset \mathcal{X}_{L_{(i,j)}}$

6: \hspace{1cm}\ \ $\mathcal{L}_{j}^{cand} \leftarrow \mathcal{L}_{j}\cup x_{L_{(i,j)}}$

7: \hspace{1cm}\ \ \textbf{if} $p(\mathbf{X}_{ j}\mid \mathbf{X}_{\Pi_{j}}, \mathcal{L}_{j}^{cand})>p(\mathbf{X}_{ j}\mid \mathbf{X}_{\Pi_{j}}, \mathcal{L}_{j}^{top})$

8: \hspace{1.5cm}\ \ $\mathcal{L}_{j}^{top} \leftarrow \mathcal{L}_{j}^{cand}$

9: \hspace{1cm}\ \ \textbf{end}

10: \hspace{0.5cm} \textbf{end}

11: \hspace{0.5cm} \textbf{if} $\mathcal{L}_{j} < \mathcal{L}_{j}^{top}$

12: \hspace{1cm} $\mathcal{L}_{j} \leftarrow \mathcal{L}_{j}^{top}$

13: \hspace{1cm} $\mathcal{L}_{j} \leftarrow makeMaximal(\mathcal{L}_{j})$

14: \hspace{0.5cm} \textbf{else}

15: \hspace{1cm} $keepClimb \leftarrow False$

16: \hspace{0.5cm} \textbf{end}

17: \textbf{end}

18: \textbf{Return} $\mathcal{L}_{j}$

\caption{Procedure for optimizing the local CSI structure for $X_{j}$\label{alg:Procedure-for-optimizing}}
\end{algorithm}
Given a scoring function, the task of learning an LDAG structure reduces to finding the model that maximizes the score given the data. This is, however, a very challenging problem since the model space is enormous. The number of DAGs for $d$ variables grows super-exponentially with $d$ (\citet{key-13}) . In practice it is hence infeasible to calculate the posterior distribution (\ref{eq:postdistr}) even for a small number of variables. Furthermore, this only covers ordinary DAGs and an expansion of the model space to include LDAGs will further increase the intractability of an exhaustive evaluation. Consequently, we need to apply some form of a search method. For this purpose we introduce a search algorithm which utilizes a non-reversible MCMC method, introduced and discussed by \citet{key-10,key-19}, combined with a direct form of optimization. The general idea is that the stochastic part of the algorithm jumps between neighbouring underlying DAGs, whose CSI structures are optimized by adding labels in a \textit{greedy hill climbing}-manner. As our score decomposes variable-wise, instead of considering the whole DAG, we can optimize the local structure of one variable at a time. The procedure is described in Algorithm \ref{alg:Procedure-for-optimizing}. For the score derived in the previous section, the termination of the algorithm occurs when the improvement of the marginal likelihood falls below the predetermined value of $\kappa$. Any deterministic optimization strategy similar to Algorithm \ref{alg:Procedure-for-optimizing} basically maps the set of DAGs onto a subset of regular maximal LDAGs $\mathcal{G}_{L}^{opt}\subseteq\mathcal{G}_{L}$. This will bring down the size of the model space explored by our MCMC method to the number of DAGs.

Various forms of MCMC are generally proposed for the Bayesian approach for learning the structural layer of probabilistic models. We utilize a non-reversible version which has been shown to possess several advantageous properties (\citet{key-10,key-19}). Let $q(\cdot|G_{L})$ denote a generic proposal distribution over the model space $\mathcal{G}_{L}^{opt}$, given $G_{L}$ for all $G_{L}\in\mathcal{G}_{L}^{opt}$. We let $G_{L}(t)$ denote the state of the chain at iteration $t$. At iteration $t=1,2,\ldots$ of the non-reversible chain, $q(\cdot|G_{L}(t))$ is used to generate the next candidate state $G_{L}^{*}$ which is then accepted with probability
\[
\min\left(1,\frac{p(G_{L}^{*})p(\mathbf{X}|G_{L}^{*})}{p(G_{L}(t))p(\mathbf{X}|G_{L}(t))}\right).
\]
If $G_{L}^{*}$ is accepted, we set $G_{L}(t+1)=G_{L}^{*}$ and otherwise $G_{L}(t+1)=G_{L}(t)$. The proposal probabilities need not to be explicitly calculated or even known as long as they remain unchanged over the iterations and the resulting chain is irreducible. The stationary distribution of such a chain does no longer follow the posterior distribution (\ref{eq:postdistr}). However, our main objective is, as previously stated, to identify only the maximum a posteriori model (\ref{eq:argmax}). The approximate solution proposed by a search chain at iteration $t$ is simply the one with the highest score visited thus far. Satisfying the conditions mentioned, the proposal distributions are defined as uniform distributions over the globally adjacent LDAGs that can be reached by adding, reversing or removing a single edge under the restriction that the resulting LDAG is acyclic.

As the difference between two successive graphs may only differ for a single edge, at most two local structures are modified at each step of the chain. Since our score $p(\mathbf{X},G_{L})$ decomposes variable-wise, only the modified local structures must be re-evaluated as the score for the rest of the variables remains unchanged. This idea can be further exploited when optimizing the local CSI-structures. At each step of the optimization procedure, we need only to re-evaluate the score with respect to the parts of the partition that are modified. For our algorithm in particular, only a single new part is created for each added label configuration.

Adding of labels yields a flexibility that facilitates the identification of "weaker" edges that might be deemed non-existing in the model space of DAGs. However, optimization of the CSI-structure cannot make up for unrealistic global independence assumptions made by an inferior underlying DAG structure. Hence, a prerequisite for learning a good LDAG structure is that it is based on a sensible underlying DAG. Getting stuck at regions with inferior underlying DAGs, will have a more severe negative effect on the learned LDAGs than not finding the optimal CSI-structure. This motivates the fact that the stochastic part of our method performs global changes whereas the optimization of the CSI-structures is done in a deterministic manner. 

To finally attend the problem of choosing an appropriate value of $\kappa$, we propose a cross-validation scheme that allows us to assess a set of candidate values. First we partition the data $\mathbf{X}$ into a training set $\mathbf{Y}$ and a test set $\mathbf{Z}$. We then apply our search method on the training data under some prior (or $\kappa$) and identify the optimal model $G_{L}^{\kappa}$. We then evaluate the learned model's ability to predict the test data by calculating the posterior predictive probability of the test data given the training data,
\begin{equation}
p(\mathbf{Z}\mid \mathbf{Y},G_{L}^{\kappa})=\underset{{\scriptstyle \theta\in\Theta_{G_{L}^{\kappa}}}}{\int}p(\mathbf{Z}\mid G_{L}^{\kappa},\theta)\cdot f(\theta\mid \mathbf{Y},G_{L}^{\kappa})d\theta.\label{eq:PREDintegral}
\end{equation}
This integral is similar to (\ref{eq:MLintegral}) but the parameter vectors are now weighted with respect to the posterior distributions updated according to the training data. Under similar assumptions made earlier, (\ref{eq:PREDintegral}) can be calculated analytically by
\begin{align}
  \label{eq:PREDsol}
  \begin{split}
   p(\mathbf{Z}\mid\mathbf{Y},G_{L}^{\kappa})  = \ &\overset{{d}}{\underset{{j=1}}{\prod}}\overset{{k_{j}}}{\underset{{l=1}}{\prod}}{ \frac{\Gamma\left(\sum_{i=1}^{r_{j}}(\alpha_{ijl}+n_{\mathbf{Y}}(x_{ji}\times S_{jl}))\right)}{\Gamma\left(n_{\mathbf{Z}}(S_{jl})+\sum_{i=1}^{r_{j}}(\alpha_{ijl}+n_{\mathbf{Y}}(x_{ji}\times S_{jl}))\right)}}\cdot\\
    &\overset{{r_{j}}}{\underset{{i=1}}{\prod}}\frac{{ \Gamma\left(n_{\mathbf{Z}}(x_{ji}\times S_{jl})+\alpha_{ijl}+n_{\mathbf{Y}}(x_{ji}\times S_{jl})\right)}}{{ \Gamma\left(\alpha_{ijl}+n_{\mathbf{Y}}(x_{ji}\times S_{jl})\right)}},
  \end{split}
\end{align}
where the bold case index indicates to which data set the outcome count refers. To reduce the variability of the method, multiple partitions  of $\mathbf{X}$ are created, $\{(\mathbf{Y}_{1},\mathbf{Z}_{1}),(\mathbf{Y}_{2},\mathbf{Z}_{2}),\ldots,(\mathbf{Y}_{M},\mathbf{Z}_{M})\}$, and the validation results are averaged according to
\begin{equation}
\rho_{pred}(\kappa)=\frac{1}{M}\overset{M}{\underset{{\scriptstyle m=1}}{\sum}}\log p(\mathbf{Z}_{m}\mid\mathbf{Y}_{m},G_{L}^{\kappa,m}).  \label{eq:rhopred}
\end{equation}
The value on $\kappa$ is finally chosen among the candidates as the one that maximizes (\ref{eq:rhopred}).
\section{Experimental results with real and simulated data sets\label{sec:Examples}}
To illustrate the properties of LDAGs, we apply our search algorithm on both a real and two simulated data sets. First we consider a real
data set that has been thoroughly investigated in earlier graphical modelling literature. After that we consider synthetic DAG- and LDAG-based models. Throughout this section we set the equivalent sample size $N=1$. By keeping $N$ constant, we instead focus on investigating how different values on $\kappa$ will affect the ability of the learned graph to predict the data structure. The learning algorithm was executed for $\kappa\in\{0.001,0.1,0.3,0.5\}$. The optimal value on $\kappa$ is chosen according to the cross-validation scheme described earlier. To create multiple partitions, the data is split up in to ten parts of the same size and each part is successively chosen as test set. For each search, we initiated 50 parallel independent search chains. The empty graph was set as the initial state of each chain and number of iterations was set to 500. The optimal graph was then simply identified as the one with the highest score visited by any of the chains.

Our real data set contains 1841 cases composed of six binary risk factors for coronary heart disease (\citet{key-4}). The meanings of the variables are explained in Table \ref{tab:Explanation heart data} (Appendix). In Table \ref{tab:PropertiesHeatLDAGs} the structural properties of the graphs, identified for different values on $\kappa$, are listed along with their scores. Here we get an indication of how the CSI-complexity increases with higher values on $\kappa$. The bold font indicate which $\kappa$ was chosen as optimal by the cross-validation procedure. The corresponding LDAG is illustrated in Figure \ref{fig:heartLDAG}. The LDAG identified for $\kappa=0.001$ contains no labels and is thereby equal to its underlying DAG. The improvement, that an added label configuration induces to the marginal likelihood, is overshadowed by the simultaneous lowering of the prior probability mass. Consequently, when $\kappa\rightarrow0$ the direct optimization will map the set of DAGs onto itself and the learning procedure is reduced to a search among ordinary DAGs. 
\begin{table}
\begin{centering}
\begin{tabular}{cccccc}
\hline 
{\footnotesize $\kappa$} & {\footnotesize $\log p(\mathbf{X},G_{L})$} & {\footnotesize $|E|$} & {\footnotesize $dim(\Theta_{G})$} & {\footnotesize $dim(\Theta_{G_{L}})$} & {$\rho_{pred}$}\tabularnewline
\hline 
\hline
{\footnotesize 0.001} & {\footnotesize -6731.82} & {\footnotesize 5} & {\footnotesize 12} & {\footnotesize 12} & {\footnotesize -671.30}\tabularnewline
{\footnotesize 0.1} & {\footnotesize -6729.69} & {\footnotesize 6} & {\footnotesize 14} & {\footnotesize 12} & {\footnotesize -670.88}\tabularnewline
{\textbf{\footnotesize 0.3}} & {\textbf{\footnotesize -6727.50}} & {\textbf{\footnotesize 6}} & {\textbf{\footnotesize 14}} & {\textbf{\footnotesize 12}} & {\textbf{\footnotesize -670.51}}\tabularnewline
{\footnotesize 0.5} & {\footnotesize -6724.68} & {\footnotesize 7} & {\footnotesize 18} & {\footnotesize 11} & {\footnotesize -670.89}\tabularnewline
\hline 
\end{tabular}
\par\end{centering}
\caption{Properties of identified LDAGs for coronary heart disease data. \label{tab:PropertiesHeatLDAGs}}
\end{table}
\begin{figure}
\begin{centering}
\includegraphics[width=5cm]{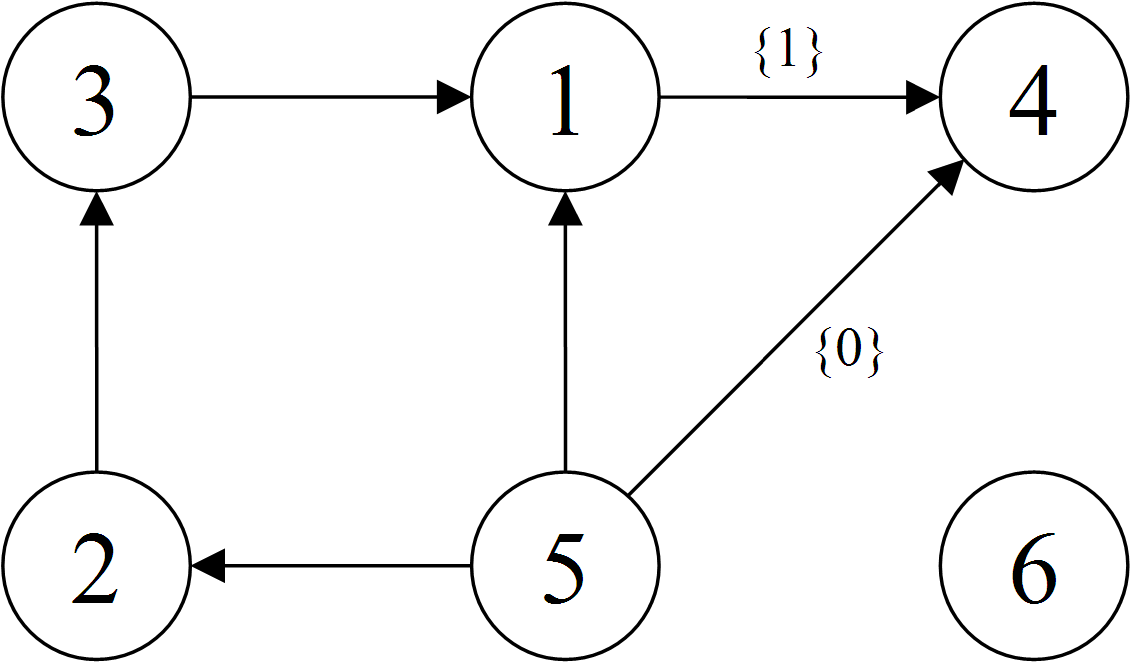}
\caption{Optimal LDAG for coronary heart disease data. \label{fig:heartLDAG}}
\par\end{centering}
\end{figure}

We now consider synthetic models from which data are generated to systematically compare models identified for different prior distributions and sample sizes. Since we know the generating model, we investigate how well the identified models approximate the true distribution. The CPDs of the models are estimated by the consistent mean a posteriori estimator as the expected value of the local posterior Dirichlet distributions. To compare the distributions, we utilize the concept of Kullback-Leibler (KL) divergence. Let $p$ denote the real distribution over $X=(X_{1},\ldots,X_{d})$ and let $p^{*}$ denote an approximation of $p$. The KL divergence between the distributions is defined by
\[
D_{KL}(p\parallel p^{*})=\underset{{\scriptstyle x\in\mathcal{X}}}{\sum}p(x)\log\frac{p(x)}{p^{*}(x)}.
\]
The KL divergence is a non-negative non-symmetric measure of the distance that is equal to zero only if $p=p^{*}$. Under the assumption that no incorrect independence assumptions are made by the model, then $D_{KL}(p\parallel p^{*})\rightarrow0$ as the sample size $n\rightarrow\infty$. In addition to the KL divergence, we also investigate how well the identified LDAGs capture the global dependence structure, i.e. the underlying DAG.

We generated data according to a DAG as well as an LDAG. The LDAG was created by adding labels to the initial DAG. The DAG and labels are illustrated in Figure \ref{fig:locCSIstructWstat} (Appendix). To generate data according to the DAG, each CPD were randomly drawn from a uniform distribution. Similar CPDs may have arisen by chance but no local CSIs were explicitly included. To generate data according to the LDAG, some of the CPDs were set identical in order to satisfy the labels. We let the sample size $n$ range from $250$ to $8000$. For each data set, we executed the learning procedure described earlier. Our results are summarized in Table \ref{PropDAGdata} (Appendix) for the DAG model and in Table \ref{PropLDAGdata} (Appendix) for the LDAG model.

As expected, the model distributions approach the true distribution when the sample size increases. This results in a steady improvement of the KL divergence as illustrated in Figure \ref{fig:allCurves}. The decrease is evident for all values on $\kappa$ but our results indicate that different prior distributions are to be preferred depending on the sample size. It also clear how the quality of most of the models begin to suffer under $\kappa=0.5$ as a result of overfitting. The reduced out-sample-performance of the models prevents the prior from being picked even once. During the simulations it became evident that the overfitting effect further escalated under even less restrictive priors. On the whole, our procedure for picking the optimal $\kappa$ performs well. The thick black curve in Figure \ref{fig:allCurves} represents the models chosen by the cross-validation. Ideally, this curve should stay below the other curves. For the DAG-based model (Figure \ref{fig:DAGmodelAC}), the curve never diverges too far from the optimal choice. For the LDAG-based model (Figure \ref{fig:LDAGmodelAC}), our procedure performs very well by always picking the optimal prior from the candidate set. 
\begin{figure}
\begin{centering}
\subfloat[DAG-based generating model.\label{fig:DAGmodelAC}]{\includegraphics[width=5.7cm]{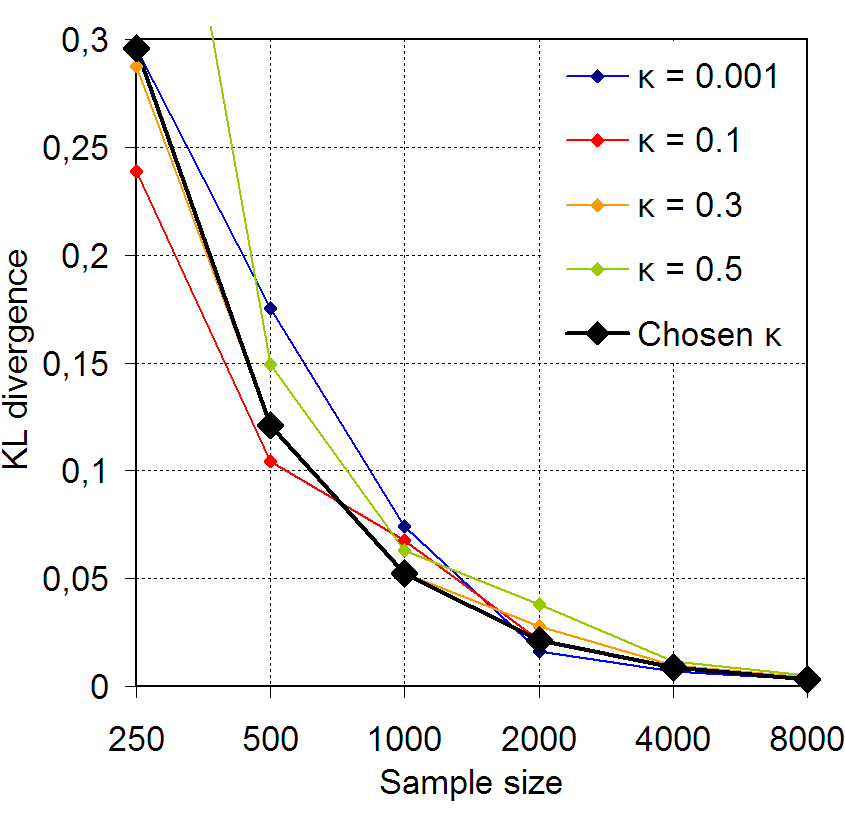}
\hspace{-0.5cm}
}~~~~~\subfloat[LDAG-based generating model. \label{fig:LDAGmodelAC} ]{\includegraphics[width=5.7cm]{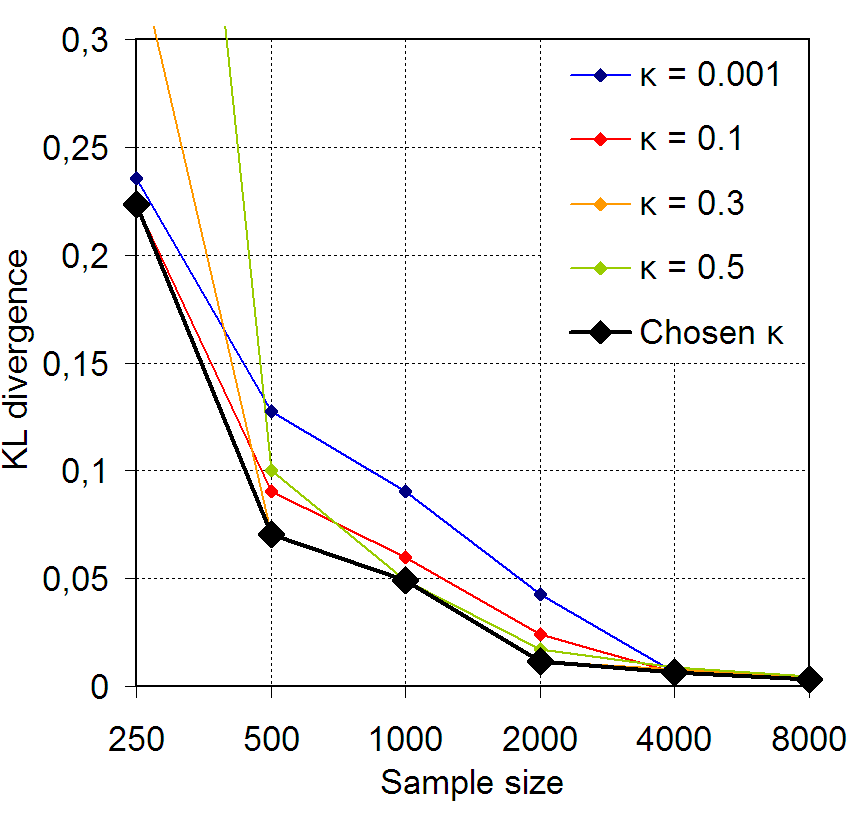}

}
\caption{KL divergence for different sample sizes under different priors. \label{fig:allCurves}}
\par\end{centering}
\end{figure}

As we can see from Table \ref{PropDAGdata} and \ref{PropLDAGdata}, all models identified under $\kappa=0.001$ are without labels since $dim(\Theta_{G})-dim(\Theta_{G_{L}})=0$. We can thus use this prior as a reference point for investigating how well LDAGs perform compared to traditional DAGs.
\begin{figure}
\begin{centering}
\subfloat[DAG-based generating model.\label{fig:DAGmodel}]{\includegraphics[width=5.7cm]{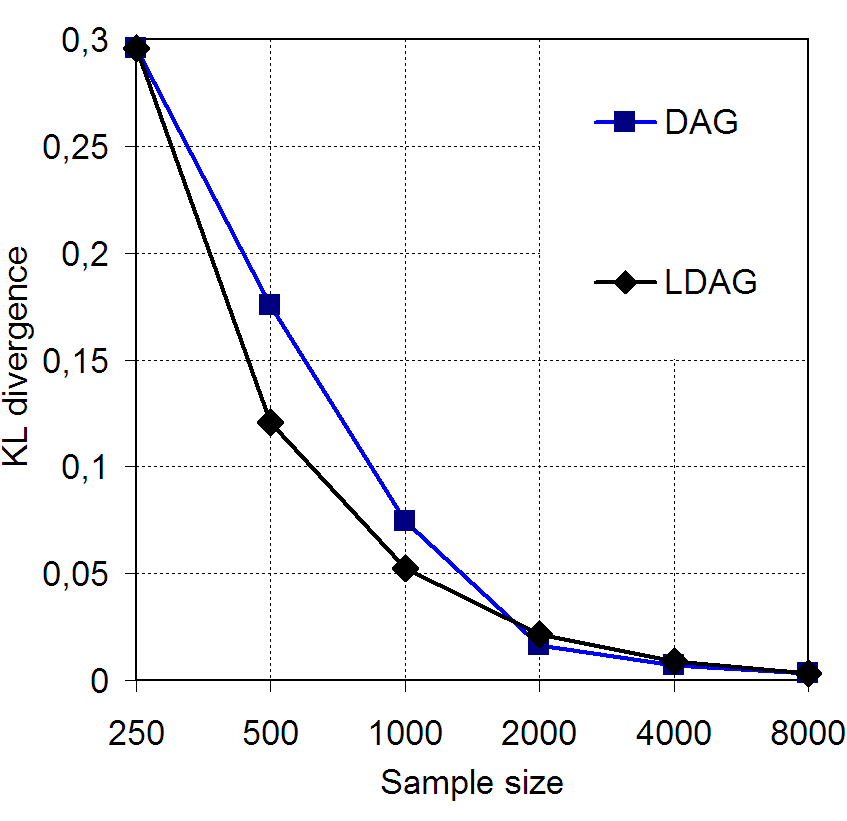}
\hspace{-0.5cm}
}~~~~~\subfloat[LDAG-based generating model. \label{fig:LDAGmodel} ]{\includegraphics[width=5.7cm]{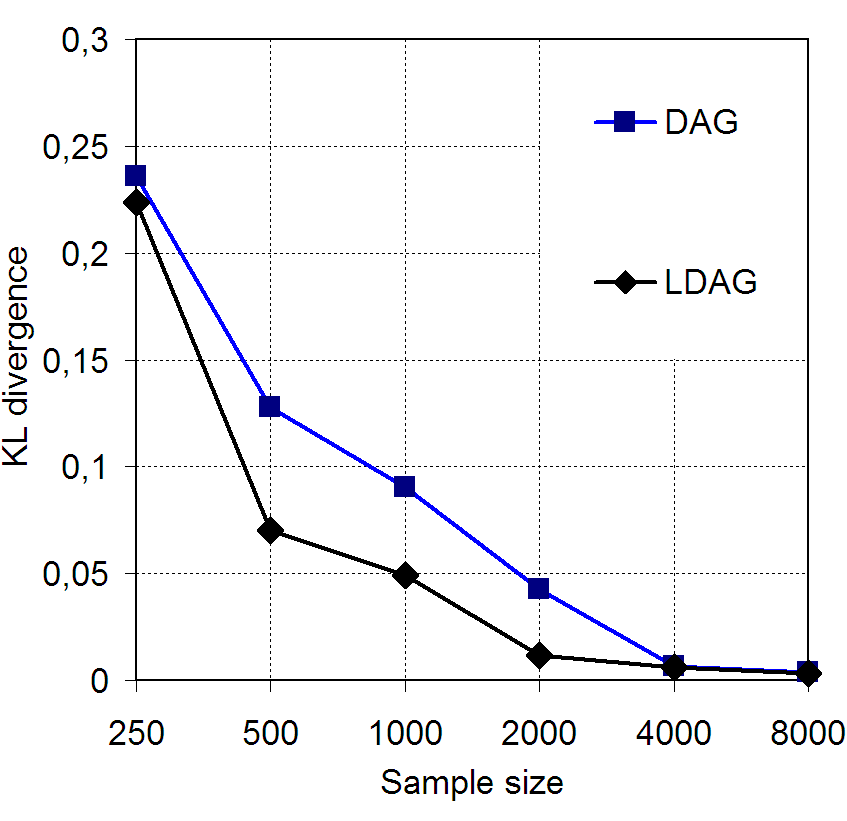}

}
\caption{Comparison of DAGs and LDAGs for different sample sizes. \label{fig:DAGvsLDAG}}
\end{centering}
\end{figure}
Figure \ref{fig:DAGvsLDAG} illustrates the difference in KL divergence between the true distribution and the approximate distributions induced by the models. The DAG curve in the figure corresponds to the $0.001$-curve from Figure \ref{fig:allCurves} and the LDAG curve corresponds to the thick black curve where the models where chosen by the initial cross-validation method. Note that the method in some cases picks the $0.001$-prior which results in a converging of the curves. We see that the LDAGs mostly outperform traditional DAGs by inducing distributions that better approximate the true distribution. This is especially clear for the medium sized samples, even when the generating model does not contain any explicit CSIs. Consequently, this seems to be the range where the models have the most to gain from adding labels. The samples are too small for discovering the true DAG structure without labels yet large enough for the structure learning to be stable even under less restrictive priors. For large enough sample sizes the two curves will eventually converge. This is a natural and inevitable phenomenon that is illuminated when investigating the structure of the underlying DAG.

If we consider the result tables (Table \ref{PropDAGdata} and \ref{PropLDAGdata}), we see that the point of convergence between the curves coincides with the sample size at which the correct underlying DAG is identified under $\kappa=0.001$, i.e. without labels. When the generating model is based on a DAG, adding labels to the correct underlying DAG will induce restrictions on the corresponding approximate distribution that is not satisfied by the true distribution. When the estimation of the parameters become stable enough, the gain from having to estimate fewer parameters cannot longer outweigh the inaccuracies of the additional restrictions. When the generating model contains explicit CSIs, the DAG curve does not overtake the LDAG curve for any of the considered sample sizes. The DAG curve will eventually catch up with the LDAG curve when $n\rightarrow\infty$ but the DAG model will, however, require some redundant parameters. Finally, the result tables also illustrate how adding labels facilitates the discovery of the true global dependence structure in the sense that LDAGs require less data to reach the correct underlying DAG compared to traditional DAGs. The flexibility of LDAGs provides an advantage over traditional DAGs in terms of structure learning, since it allows representation of more complex models with fewer parameters. However, the same flexibility may also cause overfitting if not properly regulated in the learning process.
\section{Discussion\label{sec:Discussion}}
We have further developed the idea of incorporating context-specific independence in directed graphical models by introducing a graphical representation in form of a labeled directed acyclic graph. We have shown that an LDAG is general in its representation of local CSIs as well as it is able to visualize complex dependence structures as a single entity. We also investigated properties of LDAGs in terms of model interpretability and identifiability by introducing the class of maximal regular LDAGs and the notion of CSI-equivalence.

In terms of structure learning, we have derived an LDAG-based Bayesian score and an MCMC-based search method that combines stochastic global changes with deterministic local changes. Our experimental results agree with previous research in the sense that incorporation of CSI in the learning phase improves model quality. However, we also noted that an appropriate prior must be used for optimal performance.  

An interesting extension to LDAGs could be to allow the local dependence structures to go beyond CSI in a way that can still be expressed through some form of labels. It would also be interesting to carry out a more extensive simulation study in which one could compare alternative search methods as well as compare LDAGs to other existing models.
\bibliographystyle{spbasic}
\addcontentsline{toc}{section}{\refname}\bibliography{LDAGbib}
\newpage
\section*{Appendix}
\vspace{-0.5cm}
\begin{table}[H]
\begin{centering}
\begin{tabular}{ccc}
\hline 
{\footnotesize Variable} & {\footnotesize Definition} & {\footnotesize Outcomes}\tabularnewline
\hline 
{\footnotesize $X_{1}$} & {\footnotesize Smoking} & {\footnotesize $\textrm{No}:=0,\textrm{Yes}:=1$}\tabularnewline
{\footnotesize $X_{2}$} & {\footnotesize Strenuous mental work} & {\footnotesize $\textrm{No}:=0,\textrm{Yes}:=1$}\tabularnewline
{\footnotesize $X_{3}$} & {\footnotesize Strenuous physical work} & {\footnotesize $\textrm{No}:=0,\textrm{Yes}:=1$}\tabularnewline
{\footnotesize $X_{4}$} & {\footnotesize Systolic blood pressure} & {\footnotesize $<140:=0,>140:=1$}\tabularnewline
{\footnotesize $X_{5}$} & {\footnotesize Ratio of $\beta$ and $\alpha$ lipoproteins} & {\footnotesize $<3:=0,>3:=1$}\tabularnewline
{\footnotesize $X_{6}$} & {\footnotesize Family anamnesis of coronary heart disease} & {\footnotesize $\textrm{No}:=0,\textrm{Yes}:=1$}\tabularnewline
\hline 
\end{tabular}
\caption{Description of the variables in coronary heart disease data.\label{tab:Explanation heart data}}
\end{centering}
\end{table}
\vspace{-0.5cm}
\begin{figure}[H]
\begin{center}
\includegraphics[width=\textwidth-6.5cm]{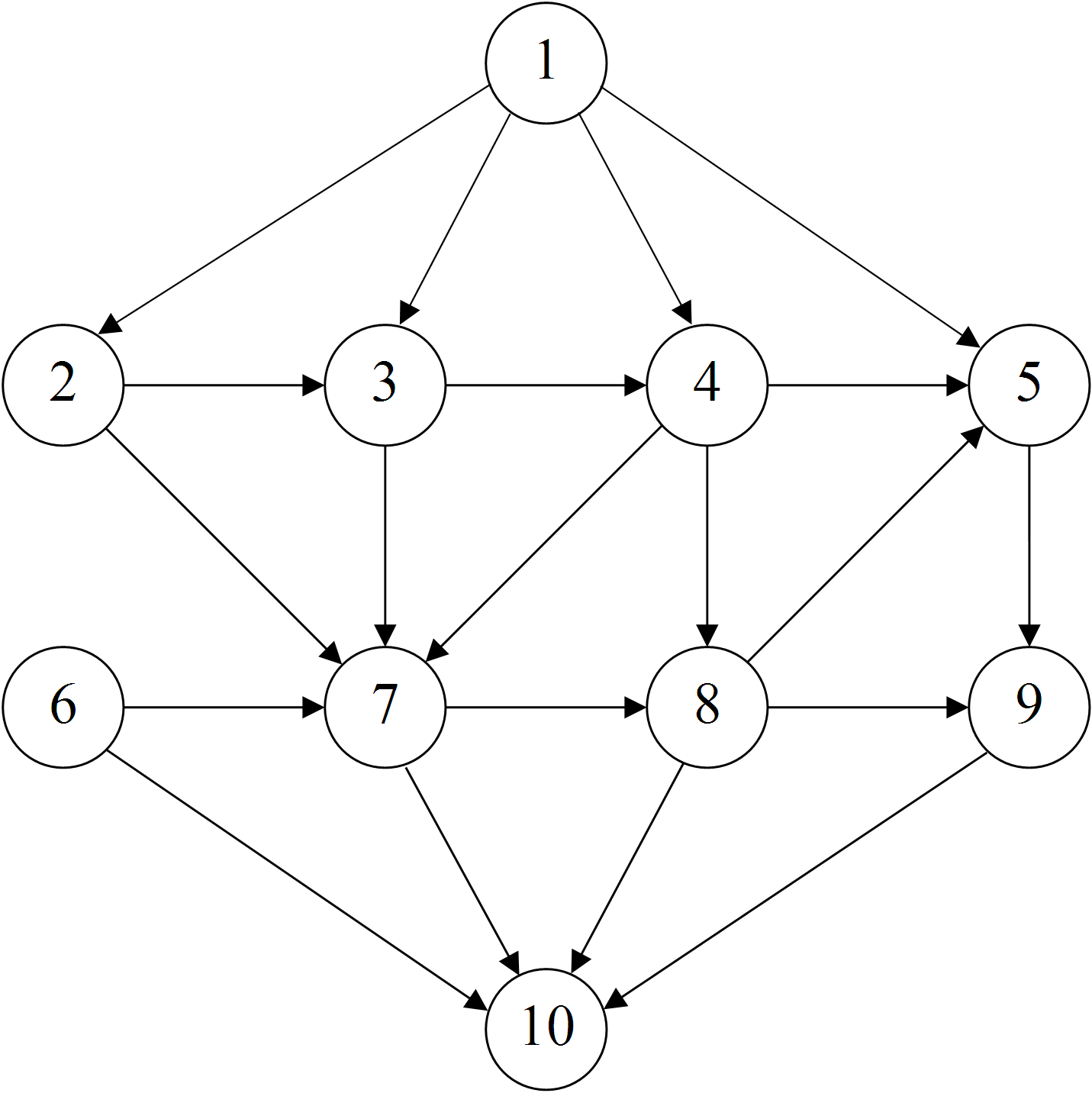}
\end{center}
\begin{center}
\begin{minipage}[t]{5cm}%
{\normalsize
\begin{eqnarray*}
\mathcal{L}_{3}:\hspace{0.5cm}	& \mathcal{L}_{(2,3)}&=\{0\}\\
\mathcal{L}_{4}:\hspace{0.5cm}	& \mathcal{L}_{(1,4)}&=\{1\}\\
\mathcal{L}_{5}:\hspace{0.5cm}	& \mathcal{L}_{(4,5)}&=\{(0,*)\}\\
			& \mathcal{L}_{(8,5)}&=\{(0,*)\}\\		
\mathcal{L}_{7}:\hspace{0.5cm}	& \mathcal{L}_{(2,7)}&=\{(1,1,0)\}\\
					& \mathcal{L}_{(3,7)}&=\{(0,1,1),(1,*,1)\}\\
					& \mathcal{L}_{(4,7)}&=\{(1,1,*)\}\\
					& \mathcal{L}_{(6,7)}&=\{(1,1,*)\}\\
\end{eqnarray*}}
\end{minipage}
\begin{minipage}[t]{5cm}%
{\normalsize
\begin{eqnarray*}
\mathcal{L}_{9}:\hspace{0.5cm}	& \mathcal{L}_{(5,9)}&=\{1\}\\
\mathcal{L}_{10}:\hspace{0.5cm}	& \mathcal{L}_{(7,10)}&=\{(1,*,*)\}\\
						& \mathcal{L}_{(8,10)}&=\{(1,*,*)\}\\
						& \mathcal{L}_{(9,10)}&=\{(1,*,*)\}
\end{eqnarray*}}
\end{minipage}
\end{center}
\caption{DAG and labels according to which the synthetic data sets were generated.\label{fig:genModels}}
\end{figure} 

\begin{table}
\begin{centering}
\begin{tabular}{cccccccc}
\hline 
{\footnotesize $n$} & {\footnotesize $\kappa$} & {\footnotesize $\log p(\mathbf{X},G_{L})$} & {\footnotesize $|E|$} & {\footnotesize $dim(\Theta_{G})-dim(\Theta_{G_{L}})$} & {\footnotesize $D_{KL}$} & {\footnotesize $\rho_{pred}(\kappa)$}\tabularnewline
\hline 
\hline 
{\footnotesize 250} &  {\textbf{\footnotesize 0.001}} & {\textbf{\footnotesize -1478.76}} & {\textbf{\footnotesize 13}} & {\textbf{\footnotesize 0}} & {\textbf{\footnotesize 0.2956}} & {\textbf{\footnotesize -146.33}}\tabularnewline
 & {\footnotesize 0.1} & {\footnotesize -1477.13} & {\footnotesize 14} & {\footnotesize 4} & {\footnotesize 0.2387} & {\footnotesize -146.96}\tabularnewline
 & {\footnotesize 0.3} & {\footnotesize -1467.68} & {\footnotesize 15} & {\footnotesize 18} & {\footnotesize 0.2873} & {\footnotesize -150.14}\tabularnewline
 & {\footnotesize 0.5} & {\footnotesize -1451.83} & {\footnotesize 20} & {\footnotesize 35} & {\footnotesize 0.5009} & {\footnotesize -157.86}\tabularnewline
\hline 
{\footnotesize 500} & {\footnotesize 0.001} & {\footnotesize -2855.23} & {\footnotesize 17} & {\footnotesize 0} & {\footnotesize 0.1755} & {\footnotesize -277.52}\tabularnewline
 & {\footnotesize 0.1} & {\footnotesize -2832.88} & {\footnotesize 19} & {\footnotesize 15} & {\footnotesize 0.1045} & {\footnotesize -276.69}\tabularnewline
{\footnotesize {*}} & {\textbf{\footnotesize 0.3}} & {\textbf{\footnotesize -2813.80}} & {\textbf{\footnotesize 20}} & {\textbf{\footnotesize 24}} & {\textbf{\footnotesize 0.1209}} & {\textbf{\footnotesize -275.13}}\tabularnewline
 & {\footnotesize 0.5} & {\footnotesize -2805.54} & {\footnotesize 21} & {\footnotesize 32} & {\footnotesize 0.1492} & {\footnotesize -280.19}\tabularnewline
\hline 
{\footnotesize 1000} & {\footnotesize 0.001} & {\footnotesize -5587.31} & {\footnotesize 18} & {\footnotesize 0} & {\footnotesize 0.0742} & {\footnotesize -546.52}\tabularnewline
 & {\footnotesize 0.1} & {\footnotesize -5569.53} & {\footnotesize 19} & {\footnotesize 13} & {\footnotesize 0.0676} & {\footnotesize -544.18}\tabularnewline
{\footnotesize {*}} & {\textbf{\footnotesize 0.3}} & {\textbf{\footnotesize -5553.73}} & {\textbf{\footnotesize 20}} & {\textbf{\footnotesize 24}}  & {\textbf{\footnotesize 0.0523}} & {\textbf{\footnotesize -544.13}}\tabularnewline
 & {\footnotesize 0.5} & {\footnotesize -5541.17} & {\footnotesize 23} & {\footnotesize 33} & {\footnotesize 0.0632} & {\footnotesize -550.40}\tabularnewline
\hline 
{\footnotesize 2000 {*}} & {\footnotesize 0.001} & {\footnotesize -11132.33} & {\footnotesize 20} & {\footnotesize 0} & {\footnotesize 0.0162} & {\footnotesize -1095.74}\tabularnewline
{\footnotesize {*}} & {\textbf{\footnotesize 0.1}} & {\textbf{\footnotesize -11097.82}} & {\textbf{\footnotesize 20}} & {\textbf{\footnotesize 18}} & {\textbf{\footnotesize 0.0215}} & {\textbf{\footnotesize -1095.68}}\tabularnewline
{\footnotesize {*}} & {\footnotesize 0.3} & {\footnotesize -11073.52} & {\footnotesize 20} & {\footnotesize 23} & {\footnotesize 0.0279} & {\footnotesize -1097.86}\tabularnewline
 & {\footnotesize 0.5} & {\footnotesize -11062.23} & {\footnotesize 24} & {\footnotesize 38} & {\footnotesize 0.0379} & {\footnotesize -1098.94}\tabularnewline
\hline 
{\footnotesize 4000 {*}} & {\footnotesize 0.001} & {\footnotesize -21988.57} & {\footnotesize 20} & {\footnotesize 0} & {\footnotesize 0.0068} & {\footnotesize -2177.83}\tabularnewline
{\footnotesize {*}} & {\textbf{\footnotesize 0.1}} & {\textbf{\footnotesize -21957.06}} & {\textbf{\footnotesize 20}} & {\textbf{\footnotesize 14}} & {\textbf{\footnotesize 0.0088}} & {\textbf{\footnotesize -2177.49}}\tabularnewline
{\footnotesize {*}} & {\footnotesize 0.3} & {\footnotesize -21940.76} & {\footnotesize 20} & {\footnotesize 15} & {\footnotesize 0.0096} & {\footnotesize -2177.77}\tabularnewline
{\footnotesize {*}} & {\footnotesize 0.5} & {\footnotesize -21929.60} & {\footnotesize 20} & {\footnotesize 17} & {\footnotesize 0.0114} & {\footnotesize -2178.32}\tabularnewline
\hline 
{\footnotesize 8000 {*}} & {\textbf{\footnotesize 0.001}} & {\textbf{\footnotesize -43822.03}} & {\textbf{\footnotesize 20}} & {\textbf{\footnotesize 0}} & {\textbf{\footnotesize 0.0034}} & {\textbf{\footnotesize -4357.81}}\tabularnewline
{\footnotesize {*}} & {\footnotesize 0.1} & {\footnotesize -43796.27} & {\footnotesize 20} & {\footnotesize 11} & {\footnotesize 0.0044} & {\footnotesize -4358.73}\tabularnewline
{\footnotesize {*}} & {\footnotesize 0.3} & {\footnotesize -43784.18} & {\footnotesize 20} & {\footnotesize 11} & {\footnotesize 0.0044} & {\footnotesize -4361.13}\tabularnewline
& {\footnotesize 0.5} & {\footnotesize -43777.35} & {\footnotesize 21} & {\footnotesize 15} & {\footnotesize 0.0051} & {\footnotesize -4361.58}\tabularnewline
\hline 
\end{tabular}
\par\end{centering}{\par}
\caption{Properties of identified LDAGs for DAG data.\label{PropDAGdata}}
\end{table}

\begin{table}
\begin{centering}
\begin{tabular}{cccccccc}
\hline 
{\footnotesize $n$} & {\footnotesize $\kappa$} & {\footnotesize $\log p(\mathbf{X},G_{L})$} & {\footnotesize $|E|$} & {\footnotesize $dim(\Theta_{G})-dim(\Theta_{G_{L}})$} & {\footnotesize $D_{KL}$} & {\footnotesize $\rho_{pred}(\kappa)$}\tabularnewline
\hline 
\hline 
{\footnotesize 250} & {\footnotesize 0.001} & {\footnotesize -1468.04} & {\footnotesize 13} & {\footnotesize 0}  & {\footnotesize 0.2357} & {\footnotesize -142.08}\tabularnewline
 & {\textbf{\footnotesize 0.1}} & {\textbf{\footnotesize -1463.54}} & {\textbf{\footnotesize 13}} & {\textbf{\footnotesize 4}} & {\textbf{\footnotesize 0.2237}} & {\textbf{\footnotesize -141.93}}\tabularnewline
 & {\footnotesize 0.3} & {\footnotesize -1451.55} & {\footnotesize 16} & {\footnotesize 15} & {\footnotesize 0.3406} & {\footnotesize -146.73}\tabularnewline
 & {\footnotesize 0.5} & {\footnotesize -1436.93} & {\footnotesize 23} & {\footnotesize 47} & {\footnotesize 0.7015} & {\footnotesize -149.02}\tabularnewline
\hline 
{\footnotesize 500} & {\footnotesize 0.001} & {\footnotesize -2943.88} & {\footnotesize 13} & {\footnotesize 0} & {\footnotesize 0.1277} & {\footnotesize -286.72}\tabularnewline
 & {\footnotesize 0.1} & {\footnotesize -2934.72} & {\footnotesize 14} & {\footnotesize 12} & {\footnotesize 0.0903} & {\footnotesize -286.34}\tabularnewline
 & {\textbf{\footnotesize 0.3}} & {\textbf{\footnotesize -2917.84}} & {\textbf{\footnotesize 17}} & {\textbf{\footnotesize 18}} & {\textbf{\footnotesize 0.0703}} & {\textbf{\footnotesize -284.12}}\tabularnewline
 & {\footnotesize 0.5} & {\footnotesize -2906.13} & {\footnotesize 21} & {\footnotesize 41} & {\footnotesize 0.1001} & {\footnotesize -284.59}\tabularnewline
\hline 
{\footnotesize 1000} & {\footnotesize 0.001} & {\footnotesize -5739.70} & {\footnotesize 16} & {\footnotesize 0} & {\footnotesize 0.0905} & {\footnotesize -565.60}\tabularnewline
 & {\footnotesize 0.1} & {\footnotesize -5725.86} & {\footnotesize 16} & {\footnotesize 15} & {\footnotesize 0.0600} & {\footnotesize -565.79}\tabularnewline
 & {\textbf{\footnotesize 0.3}} & {\textbf{\footnotesize -5705.51}} & {\textbf{\footnotesize 19}} & {\textbf{\footnotesize 25}} & {\textbf{\footnotesize 0.0490}} & {\textbf{\footnotesize -564.70}}\tabularnewline
 & {\footnotesize 0.5} & {\footnotesize -5692.74} & {\footnotesize 19} & {\footnotesize 25} & {\footnotesize 0.0490} & {\footnotesize -567.23}\tabularnewline
\hline 
{\footnotesize 2000} & {\footnotesize 0.001} & {\footnotesize -11377.83} & {\footnotesize 17} & {\footnotesize 0} & {\footnotesize 0.0428} & {\footnotesize -1122.61}\tabularnewline
 & {\footnotesize 0.1} & {\footnotesize -11328.02} & {\footnotesize 18} & {\footnotesize 18} & {\footnotesize 0.0243} & {\footnotesize -1121.34}\tabularnewline
{\footnotesize {*}} & {\textbf{\footnotesize 0.3}} & {\textbf{\footnotesize -11305.91}} & {\textbf{\footnotesize 20}} & {\textbf{\footnotesize 27}} & {\textbf{\footnotesize 0.0116}} & {\textbf{\footnotesize -1118.74}}\tabularnewline
 & {\footnotesize 0.5} & {\footnotesize -11294.84} & {\footnotesize 23} & {\footnotesize 45} & {\footnotesize 0.0171} & {\footnotesize -1119.03}\tabularnewline
\hline 
{\footnotesize 4000 {*}} & {\footnotesize 0.001} & {\footnotesize -22659.45} & {\footnotesize 20} & {\footnotesize 0} & {\footnotesize 0.0065} & {\footnotesize -2245.44}\tabularnewline
{\footnotesize {*}} & {\textbf{\footnotesize 0.1}} & {\textbf{\footnotesize -22600.54}} & {\textbf{\footnotesize 20}} & {\textbf{\footnotesize 23}} & {\textbf{\footnotesize 0.0063}} & {\textbf{\footnotesize -2243.25}}\tabularnewline
{\footnotesize {*}} & {\footnotesize 0.3} & {\footnotesize -22573.11} & {\footnotesize 20} & {\footnotesize 25} & {\footnotesize 0.0078} & {\footnotesize -2243.89}\tabularnewline
{\footnotesize {*}} & {\footnotesize 0.5} & {\footnotesize -22560.28} & {\footnotesize 20} & {\footnotesize 26} & {\footnotesize 0.0089} & {\footnotesize -2244.78}\tabularnewline
\hline 
{\footnotesize 8000 {*}} & {\footnotesize 0.001} & {\footnotesize -44862.27} & {\footnotesize 20} & {\footnotesize 0} & {\footnotesize 0.0036} & {\footnotesize -4462.29}\tabularnewline
{\footnotesize {*}} & {\textbf{\footnotesize 0.1}} & {\textbf{\footnotesize -44803.87}} & {\textbf{\footnotesize 20}} & {\textbf{\footnotesize 22}} & {\textbf{\footnotesize 0.0031}} & {\textbf{\footnotesize -4462.08}}\tabularnewline
{\footnotesize {*}} & {\footnotesize 0.3} & {\footnotesize -44778.38} & {\footnotesize 20} & {\footnotesize 24} & {\footnotesize 0.0046} & {\footnotesize -4462.43}\tabularnewline
{\footnotesize {*}} & {\footnotesize 0.5} & {\footnotesize -44766.12} & {\footnotesize 20} & {\footnotesize 24} & {\footnotesize 0.0046} & {\footnotesize -4463.07}\tabularnewline
\hline
\end{tabular}
\par\end{centering}{\par}
\caption{Properties of identified LDAGs for LDAG data.\label{PropLDAGdata}}
\end{table}

\end{document}